\newcommand{\ind}[1]{\mathbb I(#1)}
\newcommand{\eps}{\varepsilon}
\newcommand{\R}{\mathbb R}
\newcommand{\Orth}{\mathbb O}
\renewcommand{\Pr}{\mathbf P}
\newcommand{\SBM}{\mathrm{SBM}}
\newcommand{\ACSBM}{\mathrm{ACSBM}}
\newcommand{\rank}{\mathrm{rank}}
\newcommand{\diag}{\mathrm{diag}}
\newcommand{\sign}{\mathrm{sign}}
\newcommand{\vone}{\mathbf{1}}
\newcommand{\bernoulli}{\mathrm{Bernoulli}}
\newcommand{\grdpg}{\mathrm{gRDPG}}
\newcommand{\logit}{\mathrm{logit}}
\newcommand{\id}{\mathrm{id}}
\newcommand{\assumption}[1]{\textbf{\textit{(A#1)}}}
\newtheorem{theorem}{Theorem}
\newtheorem{definition}{Definition}
\newtheorem{lemma}{Lemma}
\newtheorem{proposition}{Proposition}
\newtheorem{fact}{Fact}
\newtheorem{remark}{Remark}
\title{Perfect Spectral Clustering with Discrete Covariates}
\author{Jonathan Hehir}
\author{Xiaoyue Niu}
\author{Aleksandra Slavkovi{\'c}}
\affil{Department of Statistics, Pennsylvania State University, University Park, PA}
\date{\today}
\begin{document}

\maketitle

\begin{abstract}
    Among community detection methods, spectral clustering enjoys two desirable properties: computational efficiency and theoretical guarantees of consistency. Most studies of spectral clustering consider only the edges of a network as input to the algorithm. Here we consider the problem of performing community detection in the presence of discrete node covariates, where network structure is determined by a combination of a latent block model structure and homophily on the observed covariates. We propose a spectral algorithm that we prove achieves perfect clustering with high probability on a class of large, sparse networks with discrete covariates, effectively separating latent network structure from homophily on observed covariates. To our knowledge, our method is the first to offer a guarantee of consistent latent structure recovery using spectral clustering in the setting where edge formation is dependent on both latent and observed factors.
\end{abstract}

\section{Introduction}
\label{p2-introduction}

A structural pattern commonly observed in social networks is \emph{homophily}, the tendency for two nodes sharing a certain trait to be more (or sometimes less) likely to form a connection \citep{mcpherson2001birds}. Homophily may occur on any number of traits, observed or latent, and is known to confound problems of causal inference in the social sciences \citep{smith2008social,shalizi2011homophily,goldsmith2013social,lee2021network}. Homophily, meanwhile, lies at the heart of such issues as segregation \citep{shrum1988friendship,henry2011emergence}, job access \citep{ibarra1992homophily}, and political partisanship \citep{huber2017political}, where homophily on observed traits may be the subject of estimation in its own right. In order to fully understand the effects of network patterns like observed homophily, we first need to separate them from further latent network structure.

In the literature on community detection, latent structure is frequently recovered through a clustering process involving only the network edges, reserving node covariates to validate the clustering results in an approach that conflates latent structure with observed structure \citep{peel2017ground}. What we wish to do instead is to separate the latent from the observed structural patterns. To this end, we consider an extension of the stochastic block model (SBM) \citep{holland1983stochastic} that incorporates homophily on observed, discrete node covariates into a generalized linear model (GLM). We define this model, which we call the \emph{additive-covariate SBM (ACSBM)}, in Section~\ref{p2-model}. The model was previously studied by \citet{mele2019spectral} and allows for flexible modeling choices in which latent communities take a block model structure, covariates may or may not depend on community membership, and the effects of homophily may be modeled through a range of link functions. We give an explicit representation of this model as an SBM (Proposition~\ref{thm:acsbm-representation}), which motivates the use of spectral clustering to estimate the latent structure.

In the context of SBMs, spectral clustering is known as a fast method that achieves consistency in community detection down to established recovery thresholds \citep{mcsherry2001spectral,von2007tutorial,rohe2011spectral,lei2015consistency,su2019strong,abbe2020entrywise}. In Section~\ref{p2-procedure} of this work, we propose a computationally efficient spectral algorithm for recovering the latent structure of the ACSBM. Building on techniques from the field of random dot product graphs \citep{young2007random,rubin2017statistical}, we develop new algebraic tools to synthesize latent structure over an ACSBM network partitioned by its covariate data. We are able to prove that our method recovers the latent communities of the ACSBM perfectly for sufficiently large networks with node degree at least polylogarithmic in $n$. Our theoretical analysis is outlined in Section~\ref{acsbm-consistency-results}, with proofs deferred to Appendix~\ref{appendix}, and empirical evidence given in Section~\ref{acsbm-simulations}. We conclude with a discussion of the results, their implications, and future generalizations in Section~\ref{acsbm-discussion}.

\textbf{Related Work.} Community detection with covariates is a very active area of research, with a wide variety of methods for modeling community structure, estimating effects of covariates in edge formation, and recovering community memberships. Studies that demonstrate consistency in community recovery assume a generating process with ground-truth communities. Quite commonly, these generating processes feature conditional independence between covariates and edges, given community memberships \citep[e.g.,][]{binkiewicz2017covariate,deshpande2018contextual,yang2013community,tallberg2004bayesian,newman2016structure,weng2021community}.  In these models, any two nodes belonging to the same latent community have the same connectivity patterns, regardless of their observed covariates.

Explicit separation of latent from observed effects in edge formation is possible in models lacking this conditional independence structure. Such models include \citep[e.g., ][]{hoff2007modeling,handcock2007model,choi2012stochastic,vu2013model,sweet2015incorporating,huang2018pairwise,mele2019spectral,zhang2019node,roy2019likelihood,ma2020universal}, many of which could be considered broader cases of the model we consider. For example, \citep{hoff2007modeling,handcock2007model,ma2020universal} model latent network structure via more general latent position models, which include SBM as a special case. The remainder focus more explicitly on extending SBM but usually allow greater flexibility in the role of covariates, up to and including allowing arbitrary edge covariates. Since working with SBM likelihood is computationally expensive \citep{snijders1997estimation}, many of these studies rely on approximate methods; only a small handful offer methods that scale to large networks and carry a theoretical guarantee of consistent classification. In particular, \cite{huang2018pairwise} provides a consistency guarantee for spectral clustering only when covariates are independent of community membership, and \cite{ma2020universal} provides guarantees only under the assumption of a positive semi-definite latent structure. Our results do not require these assumptions.

By far the most similar paper to ours is \citet{mele2019spectral}, which considers the same model, ACSBM, but under a different spectral estimation method. The main results concern estimation of covariate effects, while we focus on consistency of latent community recovery. Moreover, the results of \cite{mele2019spectral} implicitly rely on strong assumptions about the community structure that we wish to avoid (see Section~\ref{p2-procedure}) and require node degrees of larger order than $\sqrt{n}$. A follow-up paper \citep{mu2020spectral} proposes a modification to the algorithm to improve robustness, but results are limited to the specific case of a single covariate under the identity link, with linear node degree.

\textbf{Contribution.} We propose a novel spectral algorithm that is computationally efficient and yields perfect clustering for sufficiently large ACSBM networks with high probability. We prove this result for networks with node degree at least polylogarithmic in $n$ in which homophily effects are multiplicative on the probabilty of edge formation; empirical results suggest greater generality. To our knowledge, our method is the first to offer a guarantee of consistent latent structure recovery using spectral clustering in the important setting where edge formation is dependent on both latent and observed factors.

\textbf{Notation.} Let $[n] = \{1, \dots, n \}$, with $S_{[n]}$ denoting the set of all permutations $[n] \to [n]$. The function $\ind{\cdot}$ is the indicator function. We represent networks as adjacency matrices, e.g., $Y \in \{0, 1\}^{n \times n}$. The $i$-th row of the matrix $Y$ is denoted $Y_{i*}$, and the $i$-th column $Y_{*i}$. $\vone_n$ denotes a column vector of $n$ ones. We use $\| x \|_2$ to denote the $\ell_2$ norm of a vector $x$, $\|A\|_F$ to denote the Frobenius norm of a matrix, and $\| A \|_2$ to denote the spectral norm of the matrix $A$, i.e., $\| A \|_2 = \sup_{\|x\|_2 = 1} \|A x\|_2$. All functions of matrices are taken element-wise, with the exception of the matrix absolute value, $|A| = \sqrt{A^T A}$. When $n \to \infty$, we write $a_n = o(b_n)$ if $| a_n / b_n | \to 0$; $a_n = \omega(b_n)$ if $| a_n / b_n | \to \infty$; $a_n = O(b_n)$ if $| a_n / b_n | \leq C$ for some $C > 0$ and all $n$; and $a_n = \Theta(b_n)$ if $|a_n / b_n| \in (C_1, C_2)$ for some $C_2 > C_1 > 0$ and all $n$. Finally, we write $X_n = O_P(b_n)$ if for any $\alpha > 0$ there exists a constant $C$ such that $\Pr(|X_n / b_n| > C) < \alpha$ for all large $n$; and $X_n = o_P(a_n)$ if $\Pr(|X_n / a_n| > \eps) \to 0$ for all $\eps > 0$. Further notation is defined in text as needed.

\textbf{Code.} A Python implementation of our proposed method, including simulation code and additional examples, is available at \url{https://github.com/jonhehir/acsbm}.

\section{Network Model and Representation}
\label{p2-model}

The network model we consider is an extension of the popular stochastic block model (SBM) \citep{holland1983stochastic}, which we recall in Definition~\ref{def:acsbm-base-sbm}.

\begin{definition}
\label{def:acsbm-base-sbm}
Conditioned on community membership $\theta \in [K]^n$, the undirected network $Y \sim \SBM(\theta, B)$ is an SBM with edge probabilities $B \in [0,1]^{K \times K}$ if:
$$
Y_{ij} \overset{ind}\sim \bernoulli (B_{\theta_i \theta_j}), \quad i < j .
$$
\end{definition}

The extension we study is what we call the \emph{additive-covariate stochastic block model} (ACSBM), which is also the model studied in \cite{mele2019spectral}. In this setting, we observe a network with $n$ nodes and $K$ communities, along with a set of $M$ discrete covariates. Links are formed independently, depending on community assignments, as in SBM, as well as on covariate similarity, allowing for explicit modeling of homophily based on the observed covariates. Homophily is therefore modeled in a manner similar to exponential random graph models \citep{goodreau2009birds}, with latent structure modeled like SBM. The specific nature of the covariate influence is captured by a known link function $g$. We state a formal definition of this model in Definition~\ref{def:acsbm}.

\begin{definition}
\label{def:acsbm}
For nodes $i \in [n]$, let $\theta_i \in [K]$ denote latent community membership, and let $Z_i \in [L_1] \times \dots \times [L_M]$ be a vector of $M$ discrete, observed covariates. Let $Z = [Z_1 \mid \dots \mid Z_n]^T$. Conditioned on $\theta$ and $Z$, the undirected network $Y \sim \ACSBM(\theta, Z, B, \beta, g)$ is an additive-covariate SBM with covariate effects $\beta \in \R^M$ and known link function $g$ if:
$$
Y_{ij} \overset{ind}\sim \bernoulli \left( g^{-1} \left( B_{\theta_i \theta_j} + \sum_{m=1}^M \beta_m \ind{Z_{im} = Z_{jm}} \right) \right), \quad i < j .
$$
\end{definition}

While the link function $g$ could in principle be any strictly increasing function whose range includes $[0, 1]$, typical choices inspired by similar models include the logit link \citep[e.g.,][]{handcock2007model,choi2012stochastic,roy2019likelihood,ma2020universal}, log link \citep[e.g.,][]{vu2013model,huang2018pairwise}, probit link \citep[e.g.,][]{hoff2007modeling}, or identity link \citep{mu2020spectral}. Choice of link function should be informed by the nature in which covariates are believed to affect edge formation. Our theoretical analysis in Section~\ref{acsbm-consistency-results} employs the log link, in which the effects of observed homophily are multiplicative on the probability of edge formation. Such effects are particularly reasonable to assume in sparse networks, easily interpreted (if estimated), and mimic the form of other popular models like the degree-corrected block model \citep{karrer2011stochastic}.

The ACSBM's combination of independent edges and discrete attributes leads to an important representation result: the ACSBM, which is an extension of SBM, is also in fact a special case of the SBM. Specifically, Proposition~\ref{thm:acsbm-representation} subdivides each latent community by the observed covariates, yielding an SBM over the resulting set of ``subcommunities.'' This generalizes a similar result stated by \citet{mele2019spectral}.

\begin{proposition}
\label{thm:acsbm-representation}
If $Y \sim \ACSBM(\theta, Z, B, \beta, g)$, then $Y$ is equal in distribution to a $(K \tilde L)$-block SBM, namely $Y \overset D= \SBM(\tilde \theta, \tilde B)$ for:
$$
\begin{aligned}
\tilde L &= \prod_{m=1}^M L_m \\
\tilde \theta &= \tilde L (\theta - \vone_n) + \sum_{m=1}^{M-1} \left[ \prod_{m'=m+1}^M L_{m'} \right] (Z_{*m} - \vone_n) + Z_{*M}, \\
\tilde B &= g^{-1} ( B \boxplus \beta_1 I_{L_1} \boxplus \dots \boxplus \beta_P I_{L_M} ),
\end{aligned}
$$
where $g^{-1}$ is taken element-wise, and $A_1 \boxplus A_2 = (A_1 \otimes \vone_{d_2} \vone_{d_2}^T) + (\vone_{d_1} \vone_{d_1}^T \otimes A_2)$ for matrices $A_1 \in \R^{d_1 \times d_1}, A_2 \in \R^{d_2 \times d_2}$.
\end{proposition}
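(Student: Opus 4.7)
The plan is to verify the representation by checking that (i) the proposed labels $\tilde\theta$ assign each node to one of $K\tilde L$ ``subcommunities,'' each corresponding to a distinct tuple $(\theta_i, Z_{i1}, \dots, Z_{iM})$, and (ii) the conditional edge probability under ACSBM depends on $(i,j)$ only through $(\tilde\theta_i, \tilde\theta_j)$, with the mapping recorded in $\tilde B$. The formula for $\tilde\theta_i$ given in the statement is simply the mixed-radix (lexicographic) encoding of $(\theta_i, Z_{i1}, \dots, Z_{iM})$ into $[K\tilde L]$, which is a bijection onto its image, so I can freely index matrices by either $\tilde\theta_i \in [K\tilde L]$ or by the tuple.

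The core algebraic step is to establish, by induction on $k$, that for square matrices $A_m \in \R^{d_m \times d_m}$,
\[
\bigl(A_1 \boxplus \cdots \boxplus A_k\bigr)_{(i_1,\dots,i_k),(j_1,\dots,j_k)} = \sum_{m=1}^k (A_m)_{i_m, j_m},
\]
with rows and columns indexed in lex order on $[d_1] \times \cdots \times [d_k]$. The base case $k=2$ is the definition together with the identities $(A \otimes \vone\vone^T)_{(i_1,i_2),(j_1,j_2)} = A_{i_1 j_1}$ and $(\vone\vone^T \otimes A)_{(i_1,i_2),(j_1,j_2)} = A_{i_2 j_2}$. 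The inductive step uses the mixed-product property of Kronecker products together with $\vone_d \vone_d^T \otimes \vone_{d'}\vone_{d'}^T = \vone_{dd'}\vone_{dd'}^T$, so that $\boxplus A_{k+1}$ simply appends the term $(A_{k+1})_{i_{k+1}, j_{k+1}}$ to the existing sum.

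Applied with $A_1 = B$ and $A_{m+1} = \beta_m I_{L_m}$ for $m = 1, \dots, M$, the identity yields
\[
\bigl(B \boxplus \beta_1 I_{L_1} \boxplus \cdots \boxplus \beta_M I_{L_M}\bigr)_{(\theta_i, Z_{i1}, \dots, Z_{iM}),(\theta_j, Z_{j1}, \dots, Z_{jM})} = B_{\theta_i \theta_j} + \sum_{m=1}^M \beta_m \ind{Z_{im} = Z_{jm}},
\]
since $(I_{L_m})_{Z_{im}, Z_{jm}} = \ind{Z_{im} = Z_{jm}}$. Applying $g^{-1}$ elementwise gives $\tilde B_{\tilde\theta_i, \tilde\theta_j} = \Pr(Y_{ij} = 1 \mid \theta, Z)$ from Definition~\ref{def:acsbm}. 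Since ACSBM edges are conditionally independent given $(\theta, Z)$ and their success probabilities depend on $(i,j)$ only through $(\tilde\theta_i, \tilde\theta_j)$, the law of $Y$ matches $\SBM(\tilde\theta, \tilde B)$ in the sense of Definition~\ref{def:acsbm-base-sbm}, which is exactly the claimed equality in distribution.

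The only real obstacle is notational bookkeeping: one must verify that the explicit mixed-radix formula for $\tilde\theta_i$ matches the lex ordering implicit in the iterated Kronecker product, so that $\tilde\theta_i$ indexes the correct row and column of the Kronecker-sum matrix. Once this alignment is established, the inductive Kronecker-sum identity does all the work, and no probabilistic content beyond the definition of conditional independence and of Bernoulli edges is required.
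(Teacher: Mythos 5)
Your proposal is correct and follows essentially the same constructive route as the paper's proof: both reduce the claim to checking that the mixed-radix encoding $\tilde\theta$ and the iterated $\boxplus$ align so that the conditional edge probability $g^{-1}(B_{\theta_i\theta_j} + \sum_m \beta_m \ind{Z_{im}=Z_{jm}})$ appears in entry $(\tilde\theta_i,\tilde\theta_j)$ of $\tilde B$, and both argue by induction (the paper inducts on the number of covariates at the model level, repeatedly absorbing one covariate into an enlarged SBM, while you induct on the equivalent entrywise identity $(A_1 \boxplus \cdots \boxplus A_k)_{(i_1,\dots,i_k),(j_1,\dots,j_k)} = \sum_m (A_m)_{i_m j_m}$). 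The only cosmetic difference is that you isolate the algebra of $\boxplus$ as a standalone lemma before invoking conditional independence, which is a clean way to package the same computation.
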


\begin{remark}
$\tilde \theta$ is formed from a bijection from $[K] \times [L_1] \times \dots \times [L_M]$ to $[K \tilde L]$. In an abuse of notation, we will refer to this mapping later in the paper as $\tilde \theta(\cdot, \cdot)$ where for $k \in [K], z \in [L_1] \times \dots \times [L_M]$, $\tilde \theta(k, z) = \tilde L (k-1) + \sum_{m=1}^{M-1} \left[ \prod_{m'=m+1}^M L_{m'} \right] (z_m - 1) + z_M$.
\end{remark}

The proof of Proposition~\ref{thm:acsbm-representation} is constructive and is given in Appendix~\ref{appendix}. This representation result leads to a natural idea: since any ACSBM network is equivalently represented as an SBM, perhaps familiar SBM-fitting methods can be adapted to fit the ACSBM.

\subsection{Random Dot Product Graphs}

Spectral clustering of SBMs has been studied extensively in the context of (generalized) random dot product graphs (RDPGs) \citep{athreya2017statistical,rubin2017statistical}. The class of (g)RDPGs lends itself well to spectral estimation methods, and any binary, undirected, independent-edge network can be formulated as a generalized random dot product graph. In particular, it is well established that SBMs may be represented as gRDPGs \citep{rubin2017statistical}. Below we state the definition of a gRDPG and follow it with a representation result for ACSBM analogous to Proposition~\ref{thm:acsbm-representation}.

\begin{definition}
The matrix $I_{pq} = \diag(I_p, -I_q)$ is the diagonal matrix whose first $p$ diagonal entries are equal to $+1$ and whose remaining $q$ diagonal entries are equal to $-1$. For $x, y \in \R^d$ and some nonnegative integers $p + q = d$, the indefinite inner product of $x$ and $y$ with signature $(p, q)$ is given by $\langle x, y \rangle_{pq} = \langle x, I_{pq} y \rangle = x^T I_{pq} y$. The indefinite orthogonal group with signature $(p, q)$ is given by the set of matrices $\Orth(p, q) = \{ Q \in \R^{d \times d} : Q^T I_{pq} Q = I_{pq} \}$.
\end{definition}

\begin{definition}
Let $F_X$ be a distribution on $\R^d$. We say the undirected network $Y \sim \grdpg(n, F_X)$ is a generalized random dot product graph with signature $(p, q)$ if $X_1, \dots, X_n \overset{iid}{\sim} F_X$, and $Y_{ij} \mid X_1, \dots, X_n \overset{ind}{\sim} \bernoulli(\langle X_i, X_j \rangle_{pq})$ for $i < j$. The variable $X_i$ is referred to as the latent position of the $i$-th node.
\end{definition}

\begin{remark}
When $q = 0$, we say $Y$ is a random dot product graph (without the ``generalized'' qualification) \citep{young2007random}. In this case, $I_{pq} = I$, the indefinite inner product coincides with the usual dot product (i.e., $\langle x, y \rangle_{pq} = \langle x, y \rangle$), and $\Orth(p, q)$ coincides with the familiar group of $p \times p$ orthogonal matrices.
\end{remark}

Both RDPGs and gRDPGs suffer from inherent identifiability issues.\footnote{For a comprehensive approach to the non-identifiability of gRDPGs, see \citet{agterberg2020two}.} In the case of RDPGs, for example, if any set of latent positions is altered by a common orthogonal transformation, the resulting RDPG has the same distribution, since $\langle x, y \rangle = \langle Qx, Qy \rangle$ for any orthogonal $Q$. In gRDPGs, latent positions can only be identified up a common indefinite orthogonal transformation \citep{rubin2017statistical}. Unlike orthogonal transformations, indefinite orthogonal transformations do not preserve distances or angles, rendering them more burdensome to work with. In the following proposition, we choose our canonical latent positions based on a spectral decomposition, but we clarify that this choice of latent positions is not unique. The proof of Proposition~\ref{thm:acsbm-grdpg-representation} follows as a corollary to Proposition~\ref{thm:acsbm-representation}, based on well known results in the gRDPG literature \citep[e.g.,][Section~2.1]{rubin2017statistical}.

\begin{proposition}
\label{thm:acsbm-grdpg-representation}
If $(\theta_i, Z_i) \in [K] \times [L_1] \times \dots \times [L_M]$ are drawn i.i.d. from a distribution with p.m.f. $\Pr_{\theta, Z}$, and $Y \mid \theta, Z \sim \ACSBM(\theta, Z, B, \beta, g)$ for $Z = [Z_1 \mid \dots \mid Z_n]^T$ and some $\beta \in \R^M$, then $Y$ is equal in distribution to a gRDPG, $Y_{grdpg}$, with latent positions sampled i.i.d. from a mixture of point masses. A canonical distribution for these latent positions is as follows. Let $\tilde B$ as in Proposition~\ref{thm:acsbm-representation}, and let $U_{\tilde B} \Lambda_{\tilde B} U_{\tilde B}^T$ be an eigendecomposition of $\tilde B$. Let $X_{\tilde B} = U_{\tilde B} |\Lambda_{\tilde B}|^{1/2}$, and let $X_{\tilde B}(k, z)$ denote the $\tilde \theta(k, z)$-th row of $X_{\tilde B}$. Let $F_{X_{\tilde B}}$ as follows:
$$
F_{X_{\tilde B}} = \sum_{\substack{k \in [K], \\ z \in [L_1] \times \dots \times [L_M]}} \Pr_{\theta, Z}(\theta = k, Z = z) \delta_{X_{\tilde B}(k, z)} .
$$
Letting $q$ denote the number of negative entries in $\Lambda_{\tilde B}$, we have $Y_{grdpg} \sim \grdpg(n, F_{X_{\tilde B}})$ with signature $(p, q) = (K \tilde L - q, q)$.
\end{proposition}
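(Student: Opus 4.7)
The plan is to apply Proposition~\ref{thm:acsbm-representation} to reduce to the standard SBM-to-gRDPG representation, then verify that the resulting latent positions are i.i.d.\ draws from the stated mixture of point masses. First, invoking Proposition~\ref{thm:acsbm-representation}, I would replace $Y \sim \ACSBM(\theta, Z, B, \beta, g)$ with its equivalent representation $Y \overset{D}{=} \SBM(\tilde\theta, \tilde B)$, where $\tilde B \in \R^{K\tilde L \times K\tilde L}$ is the symmetric matrix given in that proposition and $\tilde\theta_i = \tilde\theta(\theta_i, Z_i)$ under the bijection described in the remark. Since $(\theta_i, Z_i) \overset{iid}{\sim} \Pr_{\theta,Z}$, the $\tilde\theta_i$ are i.i.d.\ on $[K\tilde L]$ with p.m.f.\ obtained by pushing $\Pr_{\theta,Z}$ through $\tilde\theta(\cdot,\cdot)$.

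Next I would carry out the standard spectral factorization of the SBM edge-probability matrix. Since $\tilde B$ is real symmetric of size $K\tilde L$, it admits an eigendecomposition $\tilde B = U_{\tilde B} \Lambda_{\tilde B} U_{\tilde B}^T$; reorder eigenpairs so that the $p$ positive eigenvalues come first and the $q$ negative eigenvalues last (zero eigenvalues can be absorbed into either block without affecting the argument). Then with $X_{\tilde B} = U_{\tilde B} |\Lambda_{\tilde B}|^{1/2}$ we have
$$
\tilde B = X_{\tilde B} \, I_{pq} \, X_{\tilde B}^T,
$$
so that for any pair $(k, z), (k', z')$,
$$
\bigl\langle X_{\tilde B}(k,z),\, X_{\tilde B}(k', z') \bigr\rangle_{pq} = \tilde B_{\tilde\theta(k,z),\, \tilde\theta(k', z')} \in [0,1].
$$
This is precisely the edge probability for two nodes in the SBM representation whose block labels are $\tilde\theta(k,z)$ and $\tilde\theta(k',z')$.

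Having written $\tilde B$ in this factored form, I would then identify node $i$'s latent position as $X_i := X_{\tilde B}(\theta_i, Z_i)$. Conditional on $(\theta, Z)$, the edges $Y_{ij}$ are independent with $Y_{ij} \sim \bernoulli(\langle X_i, X_j\rangle_{pq})$, which matches the generative mechanism of a gRDPG with signature $(p,q)$. Because $(\theta_i, Z_i)$ are i.i.d., the latent positions $X_i$ are i.i.d.\ from the mixture of point masses
$$
F_{X_{\tilde B}} = \sum_{k \in [K],\, z \in [L_1] \times \dots \times [L_M]} \Pr_{\theta,Z}(\theta = k, Z = z)\, \delta_{X_{\tilde B}(k,z)},
$$
establishing the claimed distributional equality $Y \overset{D}{=} Y_{\mathrm{grdpg}} \sim \grdpg(n, F_{X_{\tilde B}})$, with total dimension $d = K\tilde L$ and signature $(p, q) = (K\tilde L - q, q)$.

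No step is really a serious obstacle: the representation is essentially a corollary once Proposition~\ref{thm:acsbm-representation} is in hand and the standard gRDPG decomposition is applied. The only points requiring care are the ordering convention used to match $\sign(\Lambda_{\tilde B})$ with $I_{pq}$, the handling of any zero eigenvalues of $\tilde B$ (which can be dropped since they contribute $0$ to every inner product), and the bookkeeping that makes $i \mapsto X_{\tilde B}(\theta_i, Z_i)$ an i.i.d.\ sample from $F_{X_{\tilde B}}$. The non-uniqueness of the chosen latent positions (up to an $\Orth(p,q)$ transformation) should be noted in passing but does not affect the distributional claim.
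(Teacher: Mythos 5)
Your proposal is correct and follows the same route as the paper: the paper's proof simply invokes Proposition~\ref{thm:acsbm-representation} to reduce to an SBM and then cites the standard SBM-to-gRDPG spectral factorization from \citet[Section~2.1]{rubin2017statistical}. You have merely written out explicitly the factorization $\tilde B = X_{\tilde B} I_{pq} X_{\tilde B}^T$ and the i.i.d.\ bookkeeping that the paper leaves to the citation, which is a faithful expansion rather than a different argument.
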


\section{Proposed Spectral Clustering Procedure}
\label{p2-procedure}

We propose a three-part algorithm (Algorithm~\ref{alg:spectral-acsbm}) to estimate the latent community membership $\theta$ for an ACSBM network. Since an ACSBM with $K$ latent communities is equivalently a $(K \tilde L)$-block SBM per Proposition~\ref{thm:acsbm-representation}, we begin by trying to find the $K \tilde L$ ``subcommunities'' (i.e., $\tilde \theta$) of the SBM representation. Assuming we can recover the $K \tilde L$ subcommunities suitably, the primary remaining challenge is to merge these subcommunities into the original $K$ desired communities (i.e., $\theta$).

This fundamental idea is similar to that underlying \cite{mele2019spectral,mu2020spectral}, but we propose a new method for delineating the subcommunities and matching each subcommunity back to its original latent community, allowing for provably consistent results under mild assumptions. In both \cite{mele2019spectral} and \cite{mu2020spectral}, the process of finding the $K \tilde L$ subcommunities relies only on the expected separation of their spectral embeddings in Euclidean space---a condition not met if any $\beta_m$ is sufficiently small (or zero). Moreover, subsequent estimation of $\beta$ in \cite{mele2019spectral,mu2020spectral} relies implicitly on an assumption that the diagonal entries in $B$ are unique, so that an estimate of $\diag(\tilde B)$ can be clustered into $K$ sets of similar values corresponding to the $K$ latent communities. In contrast, our method is robust to non-significant homophily effects and allows for any choice of $B$ that satisfies a full-rank assumption.

Part~1 of the algorithm essentially seeks to recover $\tilde \theta$ of Proposition~\ref{thm:acsbm-representation}. To do so, we first find adjacency spectral embeddings for the full network. Then we consider each possible covariate configuration $z \in [L_1] \times \dots \times [L_M]$ (of which there are $\tilde L$ total), and cluster the embeddings corresponding to nodes bearing this covariate configuration into $K$ clusters. This yields a set of subcommunities that are each pure in their covariate distribution, since we know that $Z_i \neq Z_j \implies \tilde \theta_i \neq \tilde \theta_j$. A range of clustering methods (e.g., $K$-means) may be used here; existing theory suggests Gaussian mixture models may provide the best finite-sample performance \citep{athreya2016limit,rubin2017statistical}. The computational complexity of Part~1 will depend on the specific clustering method employed.

Part~2 of the algorithm estimates $\tilde B$ so that we may estimate a latent position for each subcommunity. While the embeddings of Part~1 also serve as estimates of latent positions, these estimates are only consistent up to an indefinite orthogonal transformation, which would pose problems for the geometry of Part~3. In practical implementations, Part~2 can be performed in linear time, relative to the number of edges in the network.

Successful clustering in Part~1 of the algorithm implies that we are able to recover $\theta$ up to a permutation for any set of nodes with the same covariates. Part~3 of the algorithm seeks a common permutation for all nodes by attempting to reconcile each covariate configuration with a given reference level (canonically $z = \vone_M$). This is achieved by finding the matching that minimizes the sum of squared distances between estimates of latent positions for each cluster. This optimization is a case of the assignment problem, which can be completed efficiently using the Hungarian algorithm \citep{edmonds1972theoretical}. The computational complexity of Part~3 depends only on $K$ and $\tilde L$. The analysis in Section~\ref{acsbm-consistency-results} assumes these quantities are constant in $n$. If allowed to grow, however, we would only expect consistency of subcommunity recovery (i.e., Part~1) if $K \tilde L$ grew slower than $\sqrt{n}$, based on existing results in SBM recovery \citep[e.g.,][]{lei2015consistency}. Under this assumption, the overall complexity of Part~3 of the algorithm is $o(n^{1.5})$ in time and $o(n)$ in space.

\begin{algorithm}
    \caption{Spectral Clustering of ACSBM}
    \label{alg:spectral-acsbm}
\begin{algorithmic}
    \STATE {\bfseries Input:} adjacency matrix $Y \in \{0, 1\}^{n \times n}$, discrete covariates $Z = [z_1 \mid \dots \mid z_n]^T$, number of latent communities $K$, embedding dimension $d$
    \STATE {\bfseries Output:} estimated block membership $\hat \theta \in [K]^n$
    
    \STATE {}
    \STATE {\textit{\# Part 1: Recover the subcommunities $\tilde \theta$}}
    \STATE Let $\hat X_Y := U |\Lambda|^{1/2}$, where $U \Lambda U^T$ is the truncated eigendecomposition of $Y$ with dimension $d$
    \STATE Let $L_1, \; \dots, \; L_M := \max (Z_{*1}), \; \dots, \; \max (Z_{*M})$
    \FOR{$z \text{ in } [L_1] \times \dots \times [L_M]$}
        \STATE Let $\mathcal I_z := \{ i : z_i = z \}$
        \STATE Let $\hat \theta_z : \mathcal I_z \to [K]$ be a function returning cluster assignments over the rows of $\hat X_Y$ corresponding to the indices $\mathcal I_z$
    \ENDFOR
    
    \STATE {}
    \STATE {\textit{\# Part 2: Estimate $\tilde B$}}
    \FOR{$1 \leq k_1 \leq k_2 \leq K \tilde L$}
        \STATE Let $D_{k_1, k_2} := \{ (i, j) \in [n] \times [n] : i \neq j, \tilde \theta(\hat \theta_{z_i}(i), z_i) = k_1, \tilde \theta(\hat \theta_{z_j}(j), z_j) = k_2 \}$
        \STATE Set $\hat {\tilde B}_{k_1, k_2} = \hat {\tilde B}_{k_2, k_1} := \sum_{(i, j) \in D_{k_1, k_2}} A_{ij} / \max\{ 1, |D_{k_1, k_2}| \}$
    \ENDFOR
    
    \STATE {}
    \STATE {\textit{\# Part 3: Reconcile $\theta$ using $z = \vone_M$ as reference level}}
    \STATE Let $\hat X_{\tilde B}(k, z)$ be the $\tilde \theta(k, z)$-th row of $V |\Psi|^{1/2}$, where $V \Psi V^T$ is an eigendecomposition of $\hat{\tilde B}$
    \FOR{$z \text{ in } [L_1] \times \dots \times [L_M]$}
        \STATE Let $\hat \sigma_z := \arg \min_{\sigma \in S_{[K]}} \sum_{k = 1}^K \| \hat X_{\tilde B}(\sigma(k),z) - \hat X_{\tilde B}(k,\vone_M) \|_2^2$
    \ENDFOR
    
    \STATE{}
    \STATE {\bfseries return} $\hat \theta = [\hat \sigma_{z_i}(\hat \theta_{z_i}(i))]_{i=1}^n$
\end{algorithmic}
\end{algorithm}

\begin{remark}
Algorithm~\ref{alg:spectral-acsbm} takes as input an embedding dimension $d$. This corresponds to the dimension of the latent positions in Proposition~\ref{thm:acsbm-grdpg-representation}, which cannot exceed $K \tilde L$. In the absence of oracle knowledge, this maximum value appears to be a suitable choice for $d$.
\end{remark}

\section{Consistency Results}
\label{acsbm-consistency-results}

Breaking Algorithm~\ref{alg:spectral-acsbm} into its three main parts, we first show that Part~1 consistently recovers $\tilde \theta$ from Proposition~\ref{thm:acsbm-representation}. Next, Part~2 yields a consistent estimate of $\tilde B$, given $\tilde \theta$ from Part~1. Finally, Part~3 yields a consistent estimate of $\theta$, given $\tilde \theta$ from Part~1 and a suitable approximation of $\tilde B$ from Part~2. To make things concrete, we consider the following setting.

\textbf{Setting.} Let $M$ be a positive integer, and let $K, L_1, \dots, L_M$ be integers greater than 1. Let $\Pr_{\theta Z}$ be a probability mass function on $[K] \times [L_1] \times \dots \times [L_M]$. Let $\beta \in \R^M$ be a vector of covariate coefficients and $B_0 \in \R^{K \times K}$ be a symmetric matrix of latent block coefficients. To allow for sparsity, let $\alpha_n \in (0, 1]$ be a sequence controlling the expected degree of our networks. For each $n \geq 1$, we draw $\{ (\theta_i, Z_i) \}_{i=1}^n \in ([K] \times [L_1] \times \dots \times [L_M])^n$ from $(\Pr_{\theta Z})^n$. Letting $B = B_0 + \log(\alpha_n) \vone_K \vone_K^T$, we then draw $Y \mid \theta, Z \sim \ACSBM(\theta, Z, B, \beta, \log)$.

As discussed in Section~\ref{p2-model}, under the log link, the effects of observed homophily are multiplicative on the probability of edge formation. When $\alpha_n \to 0$, this is essentially equivalent to the canonical logit link in the limit, since $\lim_{n \to \infty} \log^{-1}(b + \log(\alpha_n)) / \logit^{-1}(b + \log(\alpha_n)) = 1$ for any constant $b$. We note that in this setting, all edge probabilities scale by $\alpha_n$, so the expected degree of each node is $\Theta(n \alpha_n)$. Although we drop the subscripts, the quantities $\tilde B$ and $X_{\tilde B}$ depend on $n$. When we desire constant quantities, we will use $\alpha_n^{-1} \tilde B$ and $\alpha_n^{-1/2} X_{\tilde B}$.

\textbf{Assumptions.} Our full set of results will require the following assumptions. Assumption \assumption{1} is a relatively standard sparsity constraint in the SBM recovery literature. Assumption \assumption{2} is equivalent to saying the latent SBM structure is full-rank, which is also common. Assumption \assumption{3} requires that each latent community contains a node of each type with nonzero probability.

\begin{itemize}
    \item[\assumption{1}] $\alpha_n = \omega(\log^{4c}n / n)$ for the universal constant $c$ in Lemma~\ref{thm:acsbm-embedding-concentration}.
    \item[\assumption{2}] $\exp(B_0)$ is full-rank.
    \item[\assumption{3}] $\Pr_{\theta Z}(\theta = k, Z = z) > 0$ for all $(k, z) \in [K] \times [L_1] \times \dots \times [L_M]$.
\end{itemize} 

We begin by recasting the ACSBM as a gRDPG with signature $(p, q)$, as prescribed by Proposition~\ref{thm:acsbm-grdpg-representation}. Let $\hat X_Y = U |\Lambda|^{1/2}$ (where $Y \approx U \Lambda U^T$) as in Algorithm~\ref{alg:spectral-acsbm}, and let $\hat X_i$ denote the $i$-th row of $\hat X_Y$ (i.e., the spectral embedding for node $i$). Results from the gRDPG literature tell us that these spectral embeddings will be consistent estimates of the latent positions of the gRDPG, up to an unknown transformation from the indefinite orthogonal group $\Orth(p, q)$. This is stated in Lemma~\ref{thm:acsbm-embedding-concentration}, which follows from \citet[][Theorem~3]{rubin2017statistical}.

\begin{lemma}[\citet{rubin2017statistical}]
\label{thm:acsbm-embedding-concentration}
Under assumptions \assumption{1} and \assumption{3}, there exists a universal constant $c > 1$ and a sequence of matrices $Q \in \Orth(p, q)$ such that:
$$
\max_{i \in [n]} \| Q \hat X_i - X_{\tilde B}(\theta_i, Z_i) \|_2 = O_P \left( \frac{\log^c n}{\sqrt{n}} \right).
$$
\end{lemma}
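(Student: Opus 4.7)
The plan is to apply Theorem~3 of \citet{rubin2017statistical} essentially verbatim, after recasting the ACSBM as a \grdpg{} via Proposition~\ref{thm:acsbm-grdpg-representation}. By that proposition, $Y$ is equal in distribution to a \grdpg{} on $n$ nodes with signature $(p, q)$ whose latent positions are drawn i.i.d.\ from the finite mixture $F_{X_{\tilde B}}$ supported on the $K \tilde L$ rows of $X_{\tilde B}$; the true latent position of node $i$ is $X_{\tilde B}(\theta_i, Z_i)$, and the vectors $\hat X_i$ are precisely the adjacency spectral embeddings in this representation. The lemma's uniform row-wise bound is then literally the conclusion of the Rubin-Delanchy concentration theorem for \grdpg{} spectral embeddings, whose output is defined only up to an indefinite orthogonal transformation in $\Orth(p, q)$.

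What remains is to verify the hypotheses of Theorem~3 of \citet{rubin2017statistical} in the present setting. The two key conditions are (i) a signal-strength/sparsity condition of the form $n \alpha_n = \omega(\log^{4c} n)$, and (ii) a non-degeneracy condition on the mixing distribution. For (i), observe that under the log link $B = B_0 + \log(\alpha_n) \vone_K \vone_K^T$ gives $\exp(B) = \alpha_n \exp(B_0)$, and since element-wise exponentiation turns the Kronecker sum $\boxplus$ into an ordinary Kronecker product of element-wise exponentials, one obtains $\tilde B = \alpha_n \tilde B_0$ for a constant matrix $\tilde B_0$. Assumption \assumption{1} then supplies exactly the rate required. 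For (ii), assumption \assumption{3} guarantees every support point of $F_{X_{\tilde B}}$ receives positive probability, so with high probability each point is well-represented in the sample and the empirical second-moment structure of the latent positions concentrates around its population counterpart.

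The main obstacle is purely bookkeeping in the $\alpha_n$ scaling: the latent positions $X_{\tilde B}(\theta_i, Z_i)$ themselves shrink like $\sqrt{\alpha_n}$, and the canonical Rubin-Delanchy rate has an $\sqrt{n \alpha_n}$ denominator. One must verify that these combine to leave a bound of order $\log^c n / \sqrt{n}$ with no residual $\alpha_n$ factor, which is what makes \assumption{1} pick up the specific exponent $4c$ rather than a smaller power of $\log n$. Since the lemma is quoted directly from \citet{rubin2017statistical}, the proof reduces to this translation between parametrizations and a check that the $(p,q)$ signature extracted from the eigendecomposition of $\tilde B$ in Proposition~\ref{thm:acsbm-grdpg-representation} coincides with the one implicit in the cited theorem, rather than any genuinely new argument.
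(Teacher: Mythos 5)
Your proposal matches the paper's treatment: the paper does not prove this lemma at all but simply cites it as following from Theorem~3 of \citet{rubin2017statistical} after the gRDPG recasting of Proposition~\ref{thm:acsbm-grdpg-representation}, which is exactly your route. Your added bookkeeping --- that $\tilde B = \alpha_n \tilde B_0$ under the log link so that \assumption{1} supplies the $n\alpha_n = \omega(\log^{4c} n)$ sparsity hypothesis, and that the cited rate $\log^c n/\sqrt{n}$ already absorbs the $\sqrt{\alpha_n}$ scaling of the latent positions --- is correct and fills in the details the paper leaves implicit.
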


The uniform consistency of Lemma~\ref{thm:acsbm-embedding-concentration} is the key to Part~1 of the algorithm. In particular, when we look at the spectral embeddings for nodes of a given covariate configuration $z \in [L_1] \times \dots \times [L_M]$, this result yields perfect separation of the embeddings with high probability (Theorem~\ref{thm:acsbm-balls}).

\begin{theorem}
\label{thm:acsbm-balls}
Fix $z \in [L_1] \times \dots \times [L_M]$. Let $\mathcal I_z = \{ i : Z_i = z \}$. Assuming \assumption{1} and \assumption{3}, there exist $K$ sequences of balls $\mathcal B_{1,z}, \dots, \mathcal B_{K,z}$ such that $\hat X_i \in \mathcal B_{\theta_i, z}$ for all $i \in \mathcal I_z$ and $\mathcal B_{1,z}, \dots, \mathcal B_{K,z}$ are disjoint with probability approaching 1.
\end{theorem}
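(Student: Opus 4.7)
The plan is to combine the uniform embedding concentration of Lemma~\ref{thm:acsbm-embedding-concentration} with a separation estimate on the canonical latent positions $X_{\tilde B}(k, z)$, $k \in [K]$, at a fixed $z$. After controlling the conditioning of the indefinite orthogonal transformation $Q$, balls centered at $Q^{-1} X_{\tilde B}(k, z)$ of radius proportional to $\|Q^{-1}\|_2 \eps_n$ will do the job.

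First I would apply Lemma~\ref{thm:acsbm-embedding-concentration} to obtain $Q \in \Orth(p, q)$ and a rate $\eps_n = O_P(\log^c n / \sqrt n)$ satisfying $\|Q \hat X_i - X_{\tilde B}(\theta_i, Z_i)\|_2 \le \eps_n$ uniformly in $i$; restricted to $i \in \mathcal I_z$, every $Q \hat X_i$ lies within $\eps_n$ of one of the $K$ canonical points $\{X_{\tilde B}(k, z)\}_{k=1}^K$, namely the one indexed by $\theta_i$. Next I would quantify the separation $\delta_n := \min_{k_1 \neq k_2} \|X_{\tilde B}(k_1, z) - X_{\tilde B}(k_2, z)\|_2$. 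Under the log link, $\exp(B) = \alpha_n \exp(B_0)$, and Proposition~\ref{thm:acsbm-representation} combined with the identity $\exp(A_1 \boxplus A_2) = \exp(A_1) \otimes \exp(A_2)$ gives $\tilde B = \alpha_n \tilde B^{\star}$ for an $n$-free matrix $\tilde B^{\star}$. Assumption~\assumption{2} forces the rows of $\exp(B_0)$, hence those of $B_0$, and hence the $(k, z)$-indexed rows of $\tilde B$, to be distinct across $k \in [K]$ at fixed $z$; the identity $\langle X_{\tilde B}(i), X_{\tilde B}(j) \rangle_{pq} = \tilde B_{ij}$ transfers this to the latent positions, and the rescaling $X_{\tilde B} = \sqrt{\alpha_n}\, X_{\tilde B^{\star}}$ yields $\delta_n = \sqrt{\alpha_n}\, \delta^{\star}$ for a positive constant $\delta^{\star}$ depending only on $B_0$, $\beta$, and $z$.

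With these two ingredients I would set $\mathcal B_{k, z} := \{x : \|x - Q^{-1} X_{\tilde B}(k, z)\|_2 \le \|Q^{-1}\|_2 \eps_n\}$. The inclusion $\hat X_i \in \mathcal B_{\theta_i, z}$ for $i \in \mathcal I_z$ is immediate from $\|\hat X_i - Q^{-1} X_{\tilde B}(\theta_i, z)\|_2 = \|Q^{-1}(Q \hat X_i - X_{\tilde B}(\theta_i, z))\|_2 \le \|Q^{-1}\|_2 \eps_n$, and the distance between centers satisfies $\|Q^{-1}(X_{\tilde B}(k_1, z) - X_{\tilde B}(k_2, z))\|_2 \ge \|Q\|_2^{-1} \delta_n$. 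Disjointness of the balls thus reduces to $2 \|Q\|_2 \|Q^{-1}\|_2 \eps_n < \sqrt{\alpha_n}\, \delta^{\star}$, which is compatible with \assumption{1} provided the condition number $\|Q\|_2 \|Q^{-1}\|_2$ grows at most polylogarithmically in $n$.

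The delicate point --- and the main obstacle --- is exactly this control of the conditioning of $Q$. In the RDPG (positive semi-definite) case $Q$ is genuinely orthogonal and both norms equal one, but a priori elements of $\Orth(p, q)$ can have arbitrary operator norm. I would verify, by inspecting the proof of Lemma~\ref{thm:acsbm-embedding-concentration} via the Rubin-Delanchy framework, that the specific $Q$ arising from matching empirical to population eigenvectors satisfies $\|Q\|_2 \|Q^{-1}\|_2 = O_P(\mathrm{polylog}\, n)$. Given the fourth-power slack in \assumption{1}, namely $\alpha_n = \omega(\log^{4c} n / n)$ rather than merely the $\omega(\log^{2c} n / n)$ needed for separation alone, even a polylog condition number is absorbed and the balls are disjoint with probability tending to 1.
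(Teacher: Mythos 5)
Your proposal follows essentially the same route as the paper's proof: apply Lemma~\ref{thm:acsbm-embedding-concentration}, establish $\Theta(\sqrt{\alpha_n})$ separation of the fixed-$z$ population positions, and conclude disjointness because the embedding error $O_P(\log^c n/\sqrt n)$ is $o_P(\sqrt{\alpha_n})$ under \assumption{1}. The one step you leave as a promissory note --- controlling the condition number of $Q$ --- is exactly what the paper supplies by citing Lemma~5 of \citet{rubin2017statistical}, which gives almost-sure boundedness of $\|Q\|_2$ and $\|Q^{-1}\|_2$ (stronger than the polylogarithmic growth you were prepared to absorb), so your argument closes without even needing the extra slack in \assumption{1}. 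One side remark: you invoke \assumption{2} to get separation, which is not among the theorem's stated hypotheses; the paper instead asserts that the $X_{\tilde B}(k,z)$ are distinct ``by construction,'' though some such condition ensuring the rows of $\exp(B_0)$ are distinct is indeed implicitly needed, so your accounting of the assumptions is arguably the more careful one.
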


Theorem~\ref{thm:acsbm-balls} is proven in Appendix~\ref{appendix} and is sufficient to support exact recovery of $\tilde \theta$ with high probability under a variety of clustering algorithms, such as $K$-means \citep{lyzinski2014perfect}. However, while Lemma~\ref{thm:acsbm-embedding-concentration} states spherical concentration bounds, the clusters of embeddings generally are not spherical but are asymptotically normal, per the discussion in \citet{rubin2017statistical}. For this reason, Gaussian mixture modeling is often preferred over $K$-means for finite-sample performance \citep{athreya2016limit,rubin2017statistical}.

In view of Theorem~\ref{thm:acsbm-balls}, from here we assume knowledge of $\tilde \theta$ in order to demonstrate consistency in Parts~2 and 3 of the algorithm. Recall that Part~2 of the algorithm estimates $\tilde B$ from Proposition~\ref{thm:acsbm-representation}. While this estimate is not our end goal, we will use this reconstruction of $\tilde B$ to estimate the canonical latent positions $X_{\tilde B}$ from Proposition~\ref{thm:acsbm-grdpg-representation}.

\begin{theorem}
\label{thm:acsbm-tilde-b-consistency}
Let $\hat \theta_z : \mathcal I_z \to [K]$. Suppose for each $z \in [L_1] \times \dots \times [L_M]$, there exists $\tau_z \in S_{[K]}$ such that $\hat \theta_z(i) = \tau_z(\theta_i)$ for all $i \in \mathcal I_z$. Assuming \assumption{1}--\assumption{3}, if $\hat{\tilde B}$ is constructed as in Algorithm~\ref{alg:spectral-acsbm}, then there exists a sequence of $K\tilde L \times K\tilde L$ permutation matrices $T$ such that:
$$
\alpha_n^{-1} \| \hat{\tilde B} - T \tilde BT^{-1} \|_F = o_P \left( \frac{1}{\sqrt{n \log^c n}} \right).
$$
\end{theorem}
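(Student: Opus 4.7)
The plan is to identify an explicit permutation matrix $T$ from the supplied $\tau_z$'s, observe that each entry of $\hat{\tilde B}$ is then an empirical average of conditionally independent Bernoullis whose common mean is the corresponding entry of $T\tilde B T^{-1}$, and apply a Bernstein tail bound entrywise. Specifically, define a bijection $\pi:[K\tilde L]\to[K\tilde L]$ by $\pi(\tilde\theta(k,z)) := \tilde\theta(\tau_z(k),z)$; this is well-defined because the slices $\tilde\theta(\cdot,z)$ partition $[K\tilde L]$ as $z$ varies. Let $T$ be the associated permutation matrix, so that $(T\tilde B T^{-1})_{k_1,k_2} = \tilde B_{\pi^{-1}(k_1),\pi^{-1}(k_2)}$. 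The hypothesis $\hat\theta_{z_i}(i)=\tau_{z_i}(\theta_i)$ then gives $\tilde\theta(\hat\theta_{z_i}(i),z_i) = \pi(\tilde\theta(\theta_i,Z_i))$ for every $i$, so the set $D_{k_1,k_2}$ computed by the algorithm coincides with $\{(i,j): i\neq j,\,\tilde\theta(\theta_i,Z_i)=\pi^{-1}(k_1),\,\tilde\theta(\theta_j,Z_j)=\pi^{-1}(k_2)\}$, and the terms being averaged are conditionally independent $\bernoulli((T\tilde B T^{-1})_{k_1,k_2})$ given $(\theta,Z)$.

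Next, by \assumption{3} the i.i.d.\ subcommunity labels $\{\tilde\theta(\theta_i,Z_i)\}$ put positive mass on every block, so a Hoeffding bound yields $|D_{k_1,k_2}| \geq c n^2$ uniformly in $(k_1,k_2)$, for some constant $c > 0$ depending only on $\Pr_{\theta,Z}$, with probability $1-o(1)$. Since every entry of $\tilde B$ lies in $[0,C\alpha_n]$ for a constant $C$ depending only on $B_0,\beta$, Bernstein's inequality conditional on $(\theta,Z)$ gives, on the event $|D_{k_1,k_2}|\geq cn^2$,
\[
\Pr\!\left(\bigl|\hat{\tilde B}_{k_1,k_2}-(T\tilde B T^{-1})_{k_1,k_2}\bigr|>t \,\Big|\, \theta,Z\right)\leq 2\exp\!\left(-\frac{cn^2 t^2/2}{C\alpha_n+t/3}\right).
\]
Taking $t=\eps\alpha_n/\sqrt{n\log^c n}$ for arbitrary $\eps>0$, the denominator is $\sim C\alpha_n$ for large $n$, so the exponent becomes $-\Omega(n\alpha_n/\log^c n)=-\omega(\log^{3c} n)$ by \assumption{1}. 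Union-bounding over the $O(1)$ entries and using $\|A\|_F \leq K\tilde L\cdot\max_{k_1,k_2}|A_{k_1,k_2}|$ for $K\tilde L \times K\tilde L$ matrices then yields $\|\hat{\tilde B}-T\tilde B T^{-1}\|_F=o_P(\alpha_n/\sqrt{n\log^c n})$, which is the claimed rate after dividing by $\alpha_n$.

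The main obstacle is the bookkeeping rather than the concentration: one must pin down $\pi$ and $T$ so that conjugation by $T$ truly realigns $\tilde B$ with the permuted labels implicit in $\hat{\tilde B}$, and one must verify that the $D_{k_1,k_2}$ defined using estimated labels are the correct index sets to invoke conditional independence. Once this is set up, the tail bound is routine and the rate in \assumption{1} is calibrated precisely so the Bernstein exponent diverges fast enough; \assumption{2} plays no explicit role in this step, but is retained from the global setup since it becomes essential in Part~3, where the eigendecomposition of $\hat{\tilde B}$ is invoked.
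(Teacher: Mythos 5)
Your proposal is correct, and it reaches the stated rate by a genuinely different route from the paper. The paper observes that, conditional on exact recovery of $\tilde\theta$ up to relabeling, $T^{-1}\hat{\tilde B}T$ is the maximum likelihood estimator of the block-probability matrix of the SBM representation, and then simply cites the asymptotic normality results of Bickel et al.\ and Tang et al.\ to get the entrywise rate $O_P(\sqrt{\alpha_n}/n)$, which is $o_P(\alpha_n/\sqrt{n\log^c n})$ under \assumption{1}; the conversion to a Frobenius bound via constancy of $K\tilde L$ is the same in both arguments. You instead prove the concentration from scratch: your explicit construction of $\pi$ via $\pi(\tilde\theta(k,z)) = \tilde\theta(\tau_z(k),z)$ is exactly the permutation the paper leaves implicit (and is arguably cleaner than the paper's somewhat garbled identity $\tilde\theta((\tau_{z_i}\circ\hat\theta_{z_i})(i), z_i) = \tilde\theta_i$), the Hoeffding lower bound $|D_{k_1,k_2}|\geq cn^2$ from \assumption{3} is sound, and your Bernstein calculation with $t=\eps\alpha_n/\sqrt{n\log^c n}$ correctly gives an exponent of order $n\alpha_n/\log^c n = \omega(\log^{3c}n)$. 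What the paper's citation-based route buys is brevity and a sharper conclusion (an exact limiting distribution and the rate $O_P(\sqrt{\alpha_n}/n)$, strictly better than what the theorem asks for); what your route buys is self-containment and transparency about where \assumption{1} and \assumption{3} enter. Two minor points worth noting, neither fatal: for diagonal blocks $k_1=k_2$ each undirected edge appears twice in $D_{k_1,k_1}$, so the summands are not all distinct and the effective sample size is halved (the rate is unaffected); and, like the paper, you treat $\hat\theta_z$ as deterministic when invoking conditional independence of the $A_{ij}$, which is justified only because the theorem takes exact recovery as a hypothesis rather than conditioning on a $Y$-measurable event. Your remark that \assumption{2} is not used in this step is accurate.
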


Theorem~\ref{thm:acsbm-tilde-b-consistency} follows from the fact that, conditioned on $\tilde \theta$, $\hat{\tilde B}$ is the maximum likelihood estimate for a matrix of SBM probabilities corresponding to the subcommunities of $\tilde \theta$ (up to relabeling). The bounds thus follow from a bit of algebraic manipulation of well-known results \citep{bickel2013asymptotic,tang2022asymptotically}, as outlined in Appendix~\ref{appendix}. Finally, we move on to the main act: reconciling the $\tilde L$ per-covariate clusterings into a single clustering for all nodes.

\begin{theorem}
\label{thm:acsbm-permutation-consistency}
Let $\hat \theta_z : \mathcal I_z \to [K]$ and $\hat X_{\tilde B}(k, z)$ as in Algorithm~\ref{alg:spectral-acsbm}. Suppose for each $z \in [L_1] \times \dots \times [L_M]$, there exists $\tau_z \in S_{[K]}$ such that $\hat \theta_z(i) = \tau_z(\theta_i)$ for all $i \in \mathcal I_z$. Let:
\begin{equation}
\label{eq:acsbm-permutation-opt}
\hat \sigma_z = \arg \min_{\sigma \in S_{[K]}} \sum_{k = 1}^K \| \hat X_{\tilde B}(\sigma(k), z) - \hat X_{\tilde B}(k, \vone_M) \|_2^2 .
\end{equation}
Then, assuming \assumption{1}--\assumption{3}, $\hat \sigma_z(\hat \theta_z(i)) = \tau_{\vone_M}(\theta_i)$ for all $i \in [n]$ with probability approaching 1.
\end{theorem}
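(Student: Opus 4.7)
The plan is to reduce the Hungarian-type matching in Part~3 to a population-level optimization by leveraging the concentration bound on $\hat{\tilde B}$ already established in Theorem~\ref{thm:acsbm-tilde-b-consistency}. First I would translate the hypothesis $\hat\theta_z(i) = \tau_z(\theta_i)$ into an explicit permutation relationship for $\hat{\tilde B}$: the collection $\{\tau_z\}_z$ induces a single permutation $\pi$ on $[K\tilde L]$ defined by $\pi(\tilde\theta(\theta,z)) = \tilde\theta(\tau_z(\theta),z)$, so that $\mathbb{E}[\hat{\tilde B}] = P \tilde B P^T$ for the matrix $P$ representing $\pi$ (this is the $T$ of Theorem~\ref{thm:acsbm-tilde-b-consistency}). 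Combining Theorem~\ref{thm:acsbm-tilde-b-consistency} with an eigenvector perturbation bound (e.g.\ Davis--Kahan applied to the scaled matrix $\alpha_n^{-1}\hat{\tilde B}$, whose eigengap is bounded below by A2 and A3 since $\alpha_n^{-1}\tilde B$ has full rank and constant spectrum), I would deduce that the rows of $V|\Psi|^{1/2}$ converge to the rows of $PX_{\tilde B}$ up to an orthogonal transformation $W$ acting on $\R^{K\tilde L}$. In coordinates, $\hat X_{\tilde B}(k,z) = W\, X_{\tilde B}(\tilde\pi^{-1}(k,z)) + o_P(1)$ uniformly in $(k,z)$, where $\tilde\pi^{-1}$ is the restriction of $\pi^{-1}$ to the $(k,z)$ indexing.

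Second, I would study the limiting (population) objective. Because $W$ is orthogonal and Euclidean distance is orthogonally invariant, the empirical objective in \eqref{eq:acsbm-permutation-opt} converges uniformly over $\sigma\in S_{[K]}$ to a deterministic function of $\sigma$ built from the rows $X_{\tilde B}(k,z)$. The structural content of the log link and A2 is that the within-covariate block of $\tilde B$ (the submatrix indexed by $\{\tilde\theta(k,z)\}_{k=1}^K$ for a fixed $z$) equals the scalar multiple $(\prod_m e^{\beta_m})\exp(B)$, independent of $z$; consequently, the indefinite Gram matrix of $\{X_{\tilde B}(k,z)\}_k$ is the same for every $z$, and by A2 all $K$ rows are distinct for every $z$. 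Using this, a Cauchy--Schwarz/rearrangement-type argument shows that the quadratic form $\sum_k\|X_{\tilde B}(\sigma(k),z)-X_{\tilde B}(k,\vone_M)\|_2^2$ has a unique minimizing permutation, namely the one that pairs each row of $X_{\tilde B}(\cdot,z)$ with the row of $X_{\tilde B}(\cdot,\vone_M)$ corresponding to the same latent community. After un-permuting through $\pi$ on the two sides, this unique minimizer is exactly $\tau_{\vone_M}\circ\tau_z^{-1}$, yielding $\hat\sigma_z(\hat\theta_z(i)) = (\tau_{\vone_M}\circ\tau_z^{-1})(\tau_z(\theta_i)) = \tau_{\vone_M}(\theta_i)$ for every $i$.

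Finally, since $S_{[K]}$ is finite (of size $K!$, constant in $n$) and the empirical objective converges uniformly in $\sigma$ to its population counterpart by the bound from Theorem~\ref{thm:acsbm-tilde-b-consistency}, standard arguments give that with probability approaching $1$ the empirical argmin equals the unique population argmin, establishing the claim. The main obstacle in executing this plan is the second step: the Euclidean minimization is not, a priori, invariant under the indefinite orthogonal ambiguity of the gRDPG representation, so one must exploit the specific block structure coming from the log link to show that the minimum is both attained at the community-matching permutation and unique. Quantifying the population separation (gap between the minimum and the next-smallest value of the objective) in terms of the eigenvalues of $\exp(B_0)$ and the covariate-effect matrices $\exp(\beta_m I_{L_m})$ is the nontrivial algebraic work; once a positive gap is in hand, the concentration in the first step and the finiteness of $S_{[K]}$ close the argument.
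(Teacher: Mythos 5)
Your high-level architecture matches the paper's: show the population matching problem has the community-preserving permutation $\tau_{\vone_M}\circ\tau_z^{-1}$ as its unique minimizer, show the empirical objective converges, and use finiteness of $S_{[K]}$ plus a positive population gap to conclude. However, there are two genuine gaps. First, your route to controlling the estimation error via Davis--Kahan rests on a false premise: \assumption{2} guarantees only that $\exp(B_0)$ is full rank, not that $\tilde B$ is. When some $\beta_m = 0$ (a case the paper explicitly intends to cover), $\exp(\beta_m I_{L_m}) = \vone_{L_m}\vone_{L_m}^T$ is rank one, so $\tilde B = \exp(B)\otimes\exp(\beta_1 I_{L_1})\otimes\cdots\otimes\exp(\beta_M I_{L_M})$ is rank deficient and has no uniform eigengap; repeated eigenvalues would further force a subspace version of Davis--Kahan with a block-orthogonal alignment. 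The paper sidesteps all of this by observing that the objective depends on $\hat X_{\tilde B}$ only through Euclidean inner products of its rows, and $\hat X_{\tilde B}\hat X_{\tilde B}^T = V|\Psi|V^T = |\hat{\tilde B}|$, so the empirical objective is (up to a $\sigma$-independent constant) a sum of entries of $|\hat{\tilde B}|$; the perturbation bound then follows from Theorem~\ref{thm:acsbm-tilde-b-consistency} and the Lipschitz property $\|\,|A|-|B|\,\|_F\le\sqrt2\|A-B\|_F$, with no spectral-gap assumption.

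Second, and more importantly, the population step---which you correctly identify as the crux---is not actually carried out, and the one structural fact you do assert points in the wrong direction. The objective pairs rows of $X_{\tilde B}$ across \emph{different} covariate configurations ($z$ versus $\vone_M$), and it is a \emph{Euclidean} norm, so the relevant Gram matrix is $Q_{k_1k_2}=\langle X_{\tilde B}(k_1,z),X_{\tilde B}(k_2,\vone_M)\rangle$, which is a cross-block of $|\tilde B| = X_{\tilde B}X_{\tilde B}^T$, not of $\tilde B$; your claim about the within-$z$ block of $\tilde B$ being $(\prod_m e^{\beta_m})\exp(B)$ and about the ``indefinite Gram matrix'' does not bear on this. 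The needed facts are that $|\tilde B|$ factors as $|\exp(B)|\otimes|\exp(\beta_1 I_{L_1})|\otimes\cdots\otimes|\exp(\beta_M I_{L_M})|$ (the matrix absolute value distributes over the Kronecker product of symmetric factors), that each $|\exp(\beta_m I_{L_m})|$ has strictly positive entries, so the cross-block is $c_z|\exp(B)|$ with $c_z>0$ and hence positive semi-definite (positive definite under \assumption{2}), and finally a rearrangement inequality ($\sum_k A_{k\sigma(k)}\le\sum_k A_{kk}$ for $A\succeq0$, strict for $\sigma\ne\id$ when $A$ is full rank) to get existence and uniqueness of the minimizer. Distinctness of the rows $X_{\tilde B}(k,z)$ alone, which is what you invoke, does not yield uniqueness of the optimal matching. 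Until this algebra is supplied, the proof is incomplete.
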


Theorem~\ref{thm:acsbm-permutation-consistency} involves an abundance of permutations. We assume that for each covariate configuration $z$, we have a function $\hat \theta_z(\cdot)$ that recovers the values of $\theta_i$ up to a permutation $\tau_z$. We can find such functions with high probability from Part~1 of our algorithm. Then, for each $z$, we estimate a permutation $\hat \sigma_z$ in an attempt to ``reverse'' these permutations. Since the true permutations $\tau_z$ are unknowable, we cannot hope to invert $\tau_z$ exactly. Instead, we seek a permutation that satisfies $\hat \sigma_z \circ \tau_z = \tau_0$ for some common unidentifiable permutation $\tau_0 \in S_{[K]}$. By using $z = \vone_M$ as our reference level, we end up recovering $\tau_0 = \tau_{\vone_M}$.

The proof of Theorem~\ref{thm:acsbm-permutation-consistency} is broken into a number of intermediate results in Appendix~\ref{appendix}, of which we give an overview here. We first consider the task of solving an analog to the matching problem (\ref{eq:acsbm-permutation-opt}) using the true latent positions $X_{\tilde B}$ (Theorem~\ref{thm:acsbm-assignment-true-positions}). A handful of linear algebra reduces this task to an optimization problem over a submatrix of $|\tilde B| = \sqrt{\tilde B \tilde B}$. Analysis of the entries of $|\tilde B|$ is tractable under the log link, as $\tilde B$ decomposes into a chain of Kronecker products (Facts~\ref{thm:exp-kron-combine}, \ref{thm:svds-to-absolute-value-of-kronecker}). Under assumption \assumption{2}, we find that the desired permutation is the unique optimum for the matching problem.

Having shown that the matching problem yields the desired result in the absence of estimation error, it remains to show that the estimation error vanishes asymptotically (Lemma~\ref{thm:acsbm-permutation-estimation-error}). The estimation error is bounded by a multiple of $\| \; |\hat {\tilde B}| - |T \tilde B T^{-1}| \; \|_F$, a bound for which follows from Theorem~\ref{thm:acsbm-tilde-b-consistency}. This, indeed, shrinks to zero faster than the gap between the optimal and second-best matching.

\section{Simulations}
\label{acsbm-simulations}

We evaluate the empirical performance of our method on a variety of sequences of ACSBM networks. First, we consider two sequences of sparse networks ($\alpha_n = n^{-0.8}$) with $K=2$ latent communities and $M=2$ covariates drawn i.i.d. as $\bernoulli(0.5)$. The link function is chosen to be $g = \log$. In the first setting, we use a ``regular'' structure for the latent SBM, $B_0 = 1.5 \, \vone_2 \vone_2^T - I_2$. In the second, we consider something more ``irregular,'' with $B_0 = \vone_2 \vone_2^T + \diag(1, -0.2)$. In both cases, covariate effects are $\beta_1 = 1, \beta_2 = -0.5$. For each of ten values of $n$ ranging from $n=125$ to $n=128000$, we generate 100 networks, then apply Algorithm~\ref{alg:spectral-acsbm}, using Gaussian mixture modeling as our clustering method for Part~1. We calculate a misclassification rate (up to relabeling) as $\min_{\sigma \in S_{[K]}} n^{-1} \sum_{i=1}^n \ind{\sigma(\hat \theta_i) \neq \theta_i}$. The median misclassication rate is plotted in the left panel of Figure~\ref{fig:simulations}, with error bands denoting the interquartile range (IQR). The dashed line represents the worst possible misclassification rate of one half. As we might hope, as $n$ increases, misclassification falls toward zero.

\begin{figure}[b]
    \centering
    \includegraphics[width=5.5in]{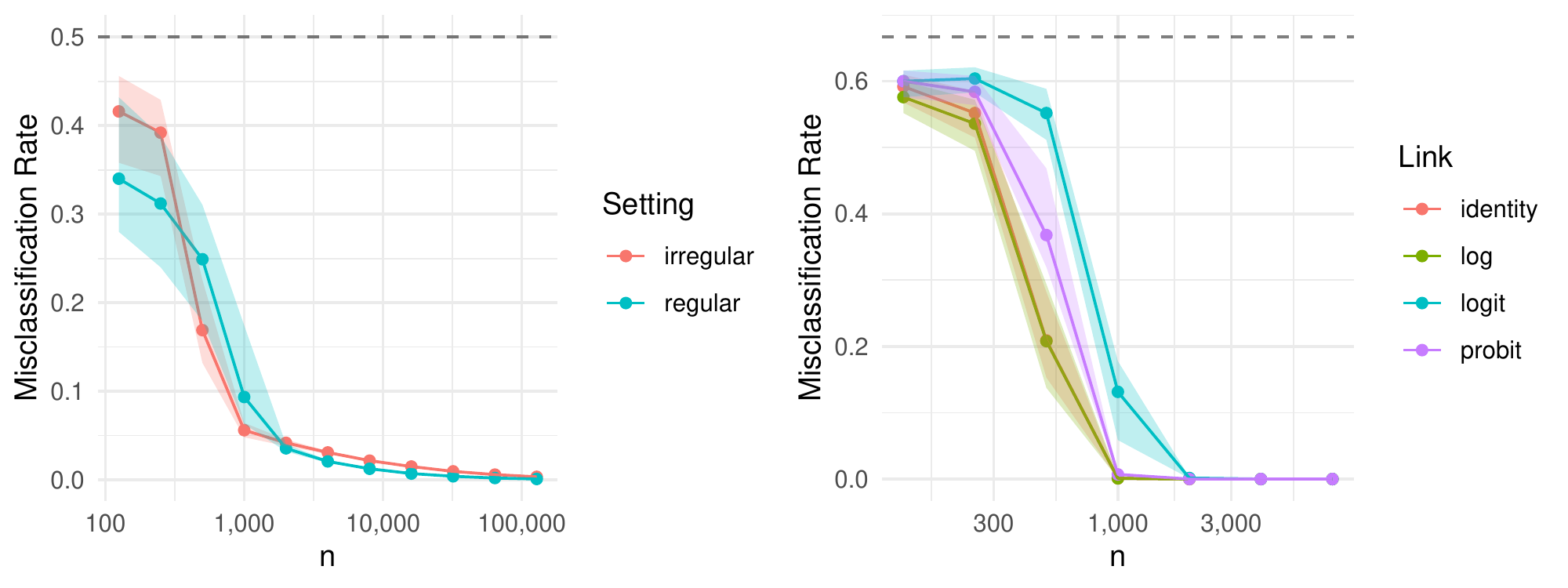}
    \caption{Median proportion (and IQR) of misclassified nodes on repeated simulations of ACSBM models. Left: Sparse settings with $K=2, M=2, g = \log, \alpha_n = n^{-0.8}$. Right: Dense settings with $K=3, M=2$, various $g$, $\alpha_n=1$. Dashed line represents worst possible misclassification ($1-1/K$). Specific parameters given in text.}
    \label{fig:simulations}
\end{figure}

The second set of simulations evaluates the performance of the algorithm on dense networks ($\alpha_n = 1$), with four settings corresponding to different choices of link function: identity, log, logit, and probit. In each case, we model the underlying latent structure as an SBM with $K=3$ communities and model $M=2$ binary covariates, drawn i.i.d. as $\bernoulli(0.5)$. For the identity link, we choose $B = 0.2 \, \vone_3 \vone_3^T -0.1 I_3, \beta_1 = 0.05, \beta_2 = -0.05$. For the remaining links, we use $B = - \vone_3 \vone_3^T -0.5 I_3, \beta_1 = -0.7, \beta_2 = 0.1$. For seven values of $n$ ranging from $n=125$ to $n=8000$, we simulate 100 networks and apply the same clustering methodology as in the previous set of simulations. The results are plotted in the right panel of Figure~\ref{fig:simulations}. Here we see consistency for a greater variety of link functions than was proven in Section~\ref{acsbm-consistency-results}, suggesting even greater generality for our proposed method. In our dense simulations, we achieve perfect clustering in the overwhelming majority of cases when $n \geq 2000$.

We caution against direct comparisons of the simulation settings presented here. For example, in the dense network simulations, one may notice that convergence appears fastest for the log link and slowest for the logit link, but each setting is different in ways that complicate comparisons. While these two settings share the same parameters, the difference in link function subtly affects the relations between entries in $\tilde B$ and leads to a network of lower density for the logit link, since $\logit^{-1}(x) < \log^{-1}(x)$ for any $x \in \R$.

These simulations were conducted on a high performance cluster, but each individual network was simulated and fit using a single CPU core (2.2 GHz Intel Xeon). The most demanding simulation setting was the sparse, regular setting at $n=128000$ nodes, where each network had about 6.2 million edges on average. The average running time for this setting using our Python-based algorithm was 4.35 minutes per network, of which 4.25 minutes were spent in Part~1 of Algorithm~\ref{alg:spectral-acsbm}.

\section{Discussion}
\label{acsbm-discussion}

The task of separating latent from observed structure in networks is critical to a variety of network inference tasks. The method we have proposed is, to our knowledge, the first to offer a rigorous guarantee of consistency of latent structure recovery using spectral clustering in the setting where edge formation is dependent on both observed and latent factors. Our proposed method is computationally efficient and theoretically appealing, using distance in latent space as a means of reconnecting a network partitioned by observed covariates.

While we have focused on estimation of latent community membership $\theta$, we should note that if one wishes to estimate the observed homophily effects $\beta$ of the ACSBM, standard GLM fitting approaches using $\hat \theta$ as a plug-in estimator for $\theta$ yield asymptotically unbiased results under the conditions of Theorem~\ref{thm:acsbm-permutation-consistency}. This follows from the fact that the ACSBM is a special case of the GLM and that $\hat \theta$ is perfect in the limit. Examples demonstrating ACSBM parameter estimation are included in the supplemental code.

We would like to note the limitations of our current work and highlight opportunities for future research. First and foremost, the combinatorial nature of the algorithm restricts its use to discrete covariates. Moreover, since Part~3 of the algorithm estimates permutations over network partitions, any error in permutation selection is likely to introduce considerable error in the final clustering of nodes. A post-processing step akin to spectral clustering with adjustment (SCWA) of \citet{huang2018pairwise} may be useful to avoid finite-sample permutation errors but has yet to be explored. Finally, while we consider only a fixed number of latent communities and covariates, it would be useful to extend our analysis to the case where these quantities grow. Based on existing results for SBM recovery \citep[e.g.,][]{lei2015consistency}, we anticipate the total number of subcommunities of Proposition~\ref{thm:acsbm-representation} is limited to $K \tilde L = o(\sqrt{n})$. It would be interesting, but well outside the scope of this paper, to extend these ideas to a continuous setting, which may alleviate these limitations.

We believe that our proposed method offers promise beyond what has been proven so far. The simulations of Section~\ref{acsbm-simulations} suggest consistency for a wide range of link functions that remains to be rigorously proven. An extension to the degree-corrected setting of \citet{karrer2011stochastic} also seems likely to follow from our current work, based on the geometry of the embeddings of degree-corrected block models and the nature of the matching algorithm, which can be recast as an optimization problem over the \textit{angles} between subcommunities in latent space. An extension for degree correction would greatly expand the practicality of the model we consider, allowing for nodes to exhibit greater variation in node degree, as commonly seen in observed networks, while retaining the simplicity and flexibility of the underlying latent block model structure.

\pagebreak

{
    \small
    \bibliographystyle{plainnat}
    \bibliography{sources}
}

\pagebreak

\appendix

\section{Appendix}
\label{appendix}

\subsection{Preliminaries}

We begin by defining the matrix absolute value and discussing some of its properties.

\begin{definition}
For a matrix $A \in \R^{m \times n}$, we define the matrix absolute value $|A| = \sqrt{A^T A}$. In particular, when $D = \diag(d_1, \dots, d_n)$, we have $|D| = \diag(|d_1|, \dots, |d_n|)$. For symmetric matrices $A = A^T$ with eigendecomposition $A = U \Lambda U^T$, we have $|A| = U |\Lambda| U^T$.
\end{definition}

\begin{fact}
\label{rmk:matrix-absolute-value-unique}
$|A|$ is the unique positive semi-definite square root of $A^T A$.
\end{fact}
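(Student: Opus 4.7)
The plan is to prove existence and uniqueness separately, both leveraging the spectral theorem applied to $A^T A$. Since $A^T A$ is symmetric and positive semi-definite, it admits an eigendecomposition $A^T A = U \Lambda U^T$ with $\Lambda = \diag(\lambda_1, \dots, \lambda_n)$ and each $\lambda_i \geq 0$. For existence, I would set $|A| := U \, \diag(\sqrt{\lambda_1}, \dots, \sqrt{\lambda_n}) \, U^T$, which is manifestly symmetric PSD and whose square equals $U \Lambda U^T = A^T A$. This matches the stated definition $|A| = \sqrt{A^T A}$ and confirms at least one PSD square root exists.

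For uniqueness, suppose $B$ is any PSD matrix satisfying $B^2 = A^T A$. The first step is to observe $B (A^T A) = B \cdot B^2 = B^3 = B^2 \cdot B = (A^T A) B$, so $B$ commutes with $A^T A$. A standard consequence is that $B$ leaves each eigenspace $E_\lambda$ of $A^T A$ invariant, so we can analyze $B$ block-by-block on these eigenspaces. On $E_\lambda$, the restriction $B|_{E_\lambda}$ is a PSD operator whose square equals $\lambda \, I_{E_\lambda}$. Since $B|_{E_\lambda}$ is symmetric it is diagonalizable with non-negative eigenvalues whose squares equal $\lambda$; all eigenvalues therefore equal $\sqrt{\lambda}$, which forces $B|_{E_\lambda} = \sqrt{\lambda} \, I_{E_\lambda}$. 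This matches $|A|$ on every eigenspace, so $B = |A|$.

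This result is standard and poses no serious obstacle; the only care needed is in the uniqueness argument, where one must explicitly invoke commutativity so as to reduce to the scalar case on each eigenspace, rather than trying to match eigenvectors directly (which would fail when $\lambda$ has multiplicity greater than one and eigenvectors of $B$ and $A^T A$ need not coincide individually, only spanwise). Once the argument is framed eigenspace-wise, the non-negativity of the square root pins down $B$ uniquely.
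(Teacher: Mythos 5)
Your proof is correct. The paper does not actually prove this fact; it simply cites \citet[Theorem~7.3.1]{horn2012matrix}, so your writeup supplies the argument that the paper outsources to the textbook. Your argument is the standard one behind that cited result: existence by diagonalizing the symmetric PSD matrix $A^T A$ and taking entrywise square roots of the eigenvalues, and uniqueness by showing any PSD square root $B$ commutes with $A^T A$, hence preserves its eigenspaces, and is forced to act as $\sqrt{\lambda}\, I$ on each eigenspace $E_\lambda$ because a symmetric PSD operator whose square is $\lambda I$ has all eigenvalues equal to $\sqrt{\lambda}$. The one step you state without justification is that $A^T A$ is PSD, which follows from $x^T A^T A x = \| A x \|_2^2 \geq 0$; with that noted, the proof is complete, and your caution about not matching eigenvectors individually when eigenvalues have multiplicity greater than one is exactly the right point of care.
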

\begin{proof}
See \citet[][Theorem~7.3.1]{horn2012matrix}.
\end{proof}

\begin{fact}
\label{thm:matrix-svd-to-absolute-value}
If $A = A^T$ and $A = U \Sigma V^T$ is a singular value decomposition of $A$, then $|A| = U \Sigma U^T$.
\end{fact}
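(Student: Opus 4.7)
The plan is to invoke the defining identity $|A|^2 = A^T A$ together with the uniqueness of the positive semi-definite square root (Fact~\ref{rmk:matrix-absolute-value-unique}) to identify $U \Sigma U^T$ as $|A|$.

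First I would compute $A^T A$ from the SVD. Since $\Sigma$ is diagonal, $A^T = V \Sigma U^T$, and hence
$A^T A = V \Sigma U^T U \Sigma V^T = V \Sigma^2 V^T$, while by the same calculation $A A^T = U \Sigma^2 U^T$. Here the symmetry hypothesis $A = A^T$ enters crucially: it forces $A^T A = A A^T$, so these two diagonalizations agree, giving $|A|^2 = A^T A = U \Sigma^2 U^T$.

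Next I would verify that $U \Sigma U^T$ is itself a positive semi-definite square root of $A^T A$. That it squares to $A^T A$ is immediate: $(U \Sigma U^T)(U \Sigma U^T) = U \Sigma^2 U^T$, which matches the expression just derived. That it is positive semi-definite follows because $\Sigma$ has nonnegative diagonal entries and $U$ is orthogonal, so for any $x \in \R^n$, $x^T U \Sigma U^T x = (U^T x)^T \Sigma (U^T x) \geq 0$. Applying Fact~\ref{rmk:matrix-absolute-value-unique}, the positive semi-definite square root of $A^T A$ is unique, and so $|A| = U \Sigma U^T$.

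There is no real obstacle: the only substantive use of symmetry is in the identification $A^T A = A A^T$, which is what allows the left-singular vectors $U$ (rather than the right-singular vectors $V$) to diagonalize $|A|^2$. The argument amounts to writing down two SVD computations and invoking Fact~\ref{rmk:matrix-absolute-value-unique}.
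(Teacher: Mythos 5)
Your proof is correct and follows essentially the same route as the paper's: compute $A A^T = U \Sigma^2 U^T$, use the symmetry of $A$ to equate this with $A^T A$, check that $U \Sigma U^T$ is a positive semi-definite square root of $A^T A$, and conclude by the uniqueness in Fact~\ref{rmk:matrix-absolute-value-unique}. The only difference is that you spell out the intermediate computation $A^T A = V \Sigma^2 V^T$ and the quadratic-form argument for positive semi-definiteness, which the paper leaves implicit.
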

\begin{proof}
We may write $A^T A = A A^T = U \Sigma V^T V \Sigma U^T = U \Sigma^2 U^T$. Note that
$$
U \Sigma U^T \succeq 0 \quad \text{and} \quad  (U \Sigma U^T) (U \Sigma U^T) = A^2 = A^T A.
$$
So by Fact~\ref{rmk:matrix-absolute-value-unique}, $|A| = U \Sigma U^T$ is the unique positive semi-definite square root of $A^T A$.
\end{proof}

\begin{fact}
\label{thm:gram-matrix-of-absolute-value-when-diagonal}
Suppose $A = X D X^T$, where $X^T X$ is diagonal and $D$ is a diagonal matrix with diagonal entries in $\{ \pm 1 \}$. Then $|A| = XX^T$.
\end{fact}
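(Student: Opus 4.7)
The plan is to invoke Fact~\ref{rmk:matrix-absolute-value-unique}: since $|A|$ is the unique positive semi-definite square root of $A^T A$, it suffices to verify the two conditions $XX^T \succeq 0$ and $(XX^T)^2 = A^T A$.

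The first condition is immediate, since any Gram matrix $XX^T$ is positive semi-definite (and symmetric). For the second, I would expand directly:
$$
A^T A = (X D X^T)^T (X D X^T) = X D (X^T X) D X^T,
$$
using $D^T = D$. The key observation is that both $D$ and $X^T X$ are diagonal by hypothesis, hence they commute, and $D^2 = I$ because the diagonal entries of $D$ lie in $\{\pm 1\}$. Therefore $D (X^T X) D = (X^T X) D^2 = X^T X$, which gives
$$
A^T A = X (X^T X) X^T = (XX^T)(XX^T) = (XX^T)^2.
$$
Combining this with positive semi-definiteness and Fact~\ref{rmk:matrix-absolute-value-unique} yields $|A| = XX^T$.

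There is no real obstacle here; the only subtle point is recognizing that the hypothesis ``$X^T X$ is diagonal'' is used solely to make $X^T X$ commute with $D$, after which $D^2 = I$ collapses the middle factor. I would present the argument in three short steps (verifying PSD, computing $A^T A$, invoking uniqueness) and expect it to occupy just a few lines.
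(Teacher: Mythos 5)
Your proof is correct and follows essentially the same route as the paper's: expand $A^T A$, use that the diagonal matrices $D$ and $X^T X$ commute together with $D^2 = I$ to collapse the middle to $X^T X$, and then invoke the uniqueness of the positive semi-definite square root (Fact~\ref{rmk:matrix-absolute-value-unique}) applied to $XX^T \succeq 0$. No substantive differences.
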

\begin{proof}
Write $A^T A$ as follows:
$$
\begin{aligned}
A^T A &= X D X^T X D X^T \\
    &= X D^2 (X^T X) X^T \quad \text{(diagonals commute)} \\
    &= X X^T X X^T \quad \text{($D^2 = I$)} \\
    &= (X X^T)^2 .
\end{aligned}
$$
Since $XX^T \succeq 0$, $|A| = XX^T$ is the unique positive semi-definite square root of $A^T A$.
\end{proof}

\begin{fact}
\label{thm:absolute-value-orthogonal-conjugation}
If $U$ is orthogonal, then $|UAU^T| = U |A| U^T$.
\end{fact}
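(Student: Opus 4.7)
The plan is to apply Fact~\ref{rmk:matrix-absolute-value-unique}, i.e., the uniqueness of the positive semi-definite square root. The strategy is to exhibit $U|A|U^T$ as a positive semi-definite matrix whose square equals $(UAU^T)^T(UAU^T)$, which by uniqueness forces it to coincide with $|UAU^T|$.

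First I would simplify the target quantity inside the absolute value. Using orthogonality ($U^T U = I$), compute
$$
(UAU^T)^T (UAU^T) = U A^T U^T U A U^T = U(A^T A) U^T.
$$
So $|UAU^T|$ is the unique positive semi-definite square root of $U(A^T A) U^T$.

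Next I would verify that $U|A|U^T$ is a valid candidate. Positive semi-definiteness is immediate: for any $x \in \R^n$, $x^T U |A| U^T x = (U^T x)^T |A| (U^T x) \geq 0$ since $|A| \succeq 0$. For the square, again using $U^T U = I$ and $|A|^2 = A^T A$,
$$
(U|A|U^T)(U|A|U^T) = U |A|^2 U^T = U (A^T A) U^T.
$$
By Fact~\ref{rmk:matrix-absolute-value-unique}, there is a unique positive semi-definite matrix whose square is $U(A^T A) U^T$, so $U|A|U^T = |UAU^T|$, as desired.

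There is no real obstacle here; the entire argument is a two-line invocation of the uniqueness of the positive semi-definite square root combined with $U^T U = I$. The only minor point to be careful about is not to confuse the singular value decomposition route (which would also work via Fact~\ref{thm:matrix-svd-to-absolute-value}, since conjugation by an orthogonal $U$ preserves symmetry and sends an SVD $A = V\Sigma W^T$ to $UAU^T = (UV)\Sigma(UW)^T$) with the cleaner uniqueness argument above.
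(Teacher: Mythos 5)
Your proof is correct and follows essentially the same route as the paper's: both show that $U|A|U^T$ is positive semi-definite and squares to $(UAU^T)^T(UAU^T) = U(A^TA)U^T$, then invoke the uniqueness of the positive semi-definite square root (Fact~\ref{rmk:matrix-absolute-value-unique}). The only cosmetic difference is that you spell out the verification of positive semi-definiteness, which the paper asserts without computation.
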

\begin{proof}
$$
\begin{aligned}
(U|A|U^T)^2 &= U |A| |A| U^T \\
    &= U A^T A U^T \quad \text{($|A|^ 2= A^T A$)} \\
    &= U A^T U^T U A U^T \\
    &= (U A U^T)^T (U A U^T) .
\end{aligned}
$$
Since $U |A| U^T \succeq 0$, $U |A| U^T$ is the unique positive semi-definite square root of $(U A U^T)^T (U A U^T)$.
\end{proof}

\begin{fact}
\label{thm:absolute-value-of-one-one-plus-id}
Suppose $A = c \vone_n \vone_n^T + d I_n$. Then $|A| = c' \vone_n \vone_n^T + d' I_n$, where:
$$
c' = \frac{|cn + d| - |d|}{n}, \quad d' = |d| .
$$
\end{fact}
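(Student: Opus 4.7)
The plan is to diagonalize $A$ explicitly and then apply the absolute value to its eigenvalues. First I would observe that $A \vone_n = c \vone_n (\vone_n^T \vone_n) + d \vone_n = (cn + d)\vone_n$, so $\vone_n$ is an eigenvector with eigenvalue $cn+d$. For any $v \perp \vone_n$, we have $\vone_n \vone_n^T v = 0$, and hence $Av = dv$, giving eigenvalue $d$ with multiplicity $n-1$.

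Since $A$ is symmetric, we can write $A = U \Lambda U^T$ with $\Lambda = \diag(cn+d,\, d,\, \dots,\, d)$, where the first column of $U$ is the unit eigenvector $u_1 = \vone_n/\sqrt{n}$ and the remaining columns form an orthonormal basis of $\vone_n^\perp$. By the definition of the matrix absolute value on symmetric matrices, $|A| = U|\Lambda|U^T$ with $|\Lambda| = \diag(|cn+d|,\, |d|,\, \dots,\, |d|)$.

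To finish, I would rewrite $|\Lambda| = |d| I_n + (|cn+d| - |d|)\, e_1 e_1^T$, conjugate by $U$ to get
\[
|A| = |d| I_n + (|cn+d| - |d|)\, u_1 u_1^T,
\]
and use $u_1 u_1^T = \vone_n \vone_n^T / n$ to conclude $|A| = |d| I_n + \tfrac{|cn+d| - |d|}{n}\vone_n \vone_n^T$, which matches the claimed values of $c'$ and $d'$.

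The argument is essentially routine; there is no real obstacle. The only step that requires a bit of care is packaging the diagonal $|\Lambda|$ as a scalar multiple of the identity plus a rank-one correction so that conjugation by $U$ produces the $\vone_n \vone_n^T$ term with the correct coefficient $1/n$.
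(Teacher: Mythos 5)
Your proof is correct and follows essentially the same route as the paper's: both diagonalize $A$ via the eigenstructure of $\vone_n \vone_n^T$ (eigenvalue $n$ on $\vspan\{\vone_n\}$, zero on its complement), apply the absolute value to the eigenvalues $cn+d$ and $d$, and match coefficients to recover $c'$ and $d'$. The paper simply phrases the last step as solving $|c\Lambda + dI_n| = c'\Lambda + d'I_n$ rather than unpacking the rank-one correction explicitly, but the argument is the same.
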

\begin{proof}
Let $U \Lambda U^T$ be an eigendecomposition of $\vone_n \vone_n^T$. Then $\Lambda = \diag(n, 0, \dots, 0)$. Now we write an eigendecomposition for $A$:
\begin{equation}
\label{eq:one-one-plus-id}
\begin{aligned}
A &= c \vone_n \vone_n^T + d I_n \\
    &= c U \Lambda U^T + d U U^T \\
    &= U (c \Lambda + d I_n ) U^T .
\end{aligned}
\end{equation}
By definition, then:
$$
|A| = U | c \Lambda + d I_n | U^T ,
$$
which is of the same form as eq. (\ref{eq:one-one-plus-id}), albeit with different constants. The result follows by solving the following for $c'$ and $d'$:
$$
\diag( |cn+d|, |d|, \dots, |d| ) = | c \Lambda + d I_n | = c' \Lambda + d' I_n = \diag(c'n + d', d', \dots, d') .
$$
\end{proof}

\begin{fact}
\label{thm:positive-absolute-value-of-one-one-plus-id}
Suppose $A = c \vone_n \vone_n^T + d I_n$, and $A_{ij} > 0$ for all $i, j \in [n]$. Then $|A|_{ij} > 0$ for all $i, j \in [n]$.
\end{fact}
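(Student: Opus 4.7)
The plan is to apply Fact~\ref{thm:absolute-value-of-one-one-plus-id} directly: it tells us $|A| = c' \vone_n \vone_n^T + d' I_n$ with $c' = (|cn+d| - |d|)/n$ and $d' = |d|$. Every off-diagonal entry of $|A|$ equals $c'$ and every diagonal entry equals $c' + d'$, so it suffices to prove $c' > 0$; positivity of the diagonal entries then follows since $d' = |d| \geq 0$.

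Next I would unpack the hypothesis. Off-diagonal positivity $A_{ij} = c > 0$ and diagonal positivity $A_{ii} = c + d > 0$ are the only consequences of $A_{ij} > 0$ for all $i,j$. The inequality $c' > 0$ amounts to $|cn+d| > |d|$, which upon squaring is equivalent to $cn(cn + 2d) > 0$. Since $c > 0$ and $n \geq 1$, this further reduces to the single inequality $cn + 2d > 0$.

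I would finish by a short case split on the sign of $d$. If $d \geq 0$, then $cn + 2d > 0$ is immediate. If $d < 0$, then $c + d > 0$ gives $c > -d = |d|$, so for $n \geq 2$ we have $cn \geq 2c > 2|d| = -2d$, which rearranges to $cn + 2d > 0$; the case $n = 1$ is trivial because $A$ then has only the single diagonal entry $c+d>0$, whose absolute value is itself. This completes the verification that $c' > 0$ and hence that $|A|_{ij} > 0$ for all $i,j$.

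I do not foresee any substantive obstacle. The statement is essentially a sign-tracking corollary of the previous Fact, and the only mild subtlety is handling the case $d < 0$, where one must use both hypotheses $c > 0$ and $c + d > 0$ in tandem (rather than either alone) to conclude $cn + 2d > 0$.
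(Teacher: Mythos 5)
Your proposal is correct and follows essentially the same route as the paper: both invoke Fact~\ref{thm:absolute-value-of-one-one-plus-id} and then verify $c' > 0$ from the entrywise positivity of $A$, with the same case handling for $n = 1$ and $d < 0$ (the paper shows $cn + d > |d|$ directly from $c > |d|$, while you square to reduce to $cn + 2d > 0$ — a cosmetic difference only).
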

\begin{proof}
We begin with the trivial cases: If $d \geq 0$, then $A \succeq 0$ and $A = |A|$. Also if $n = 1$, then $A$ is scalar, and $|A|$ is the usual scalar absolute value.

Assume then that $d < 0$ and $n \geq 2$. Let $|A| = c' \vone_n \vone_n^T + d' I_n$ as defined in Fact~\ref{thm:absolute-value-of-one-one-plus-id}. Since all entries in $A$ are positive, then $c > -d = |d|$. Consequently:
$$
cn + d = cn - |d| > |d|n - |d| = |d| (n-1) \geq |d|
$$
As a result, $c'$ must be positive, since $|cn + d| = cn + d > |d|$. Since $d'$ is also positive, every entry in $|A|$ is positive.
\end{proof}

\begin{fact}
\label{thm:matrix-absolute-value-perturbation}
For any two square matrices of equal dimension, $\| \; | A | -  | B | \; \|_F \leq \sqrt{2} \| A - B \|_F$.
\end{fact}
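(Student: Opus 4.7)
The plan is to reduce to the case of symmetric matrices via a standard $2\times 2$ block embedding, and then handle that case by a short trace computation that uses the positive/negative parts of the spectral decomposition.

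\textbf{Reduction.} Define the symmetric $2n \times 2n$ matrices
$$\tilde A = \begin{pmatrix} 0 & A \\ A^T & 0 \end{pmatrix}, \qquad \tilde B = \begin{pmatrix} 0 & B \\ B^T & 0 \end{pmatrix}.$$
A direct calculation gives $\tilde A^T \tilde A = \tilde A^2 = \diag(A A^T,\, A^T A)$; since $\diag(|A^T|,\, |A|)$ is positive semi-definite with square equal to $\tilde A^T \tilde A$, Fact~\ref{rmk:matrix-absolute-value-unique} forces $|\tilde A| = \diag(|A^T|,\, |A|)$, and analogously for $\tilde B$. Therefore
$$\| \, |\tilde A| - |\tilde B| \, \|_F^2 = \| \, |A^T| - |B^T| \, \|_F^2 + \| \, |A| - |B| \, \|_F^2 \geq \| \, |A| - |B| \, \|_F^2,$$
while $\|\tilde A - \tilde B\|_F^2 = 2 \|A - B\|_F^2$. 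So it is enough to prove the sharper inequality $\| \, |X| - |Y| \, \|_F \leq \|X - Y\|_F$ for symmetric $X, Y$.

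\textbf{Symmetric case.} For symmetric $X$ we have $|X|^2 = X^T X = X^2$, and similarly for $Y$, so expanding the squared Frobenius norms as traces gives
$$\|X - Y\|_F^2 - \| \, |X| - |Y| \, \|_F^2 = 2 \, \mathrm{tr}\bigl(|X| \, |Y| - X Y\bigr).$$
Write $X = X_+ - X_-$ and $Y = Y_+ - Y_-$, with $X_\pm, Y_\pm \succeq 0$ coming from the spectral decompositions, so that $|X| = X_+ + X_-$ and $|Y| = Y_+ + Y_-$. Multiplying out the four products and subtracting yields
$$|X| \, |Y| - X Y = 2 (X_+ Y_- + X_- Y_+).$$
Since $\mathrm{tr}(PQ) \geq 0$ whenever $P, Q \succeq 0$ (write $\mathrm{tr}(PQ) = \mathrm{tr}(P^{1/2} Q P^{1/2})$), both summands have nonneg trace, which finishes the symmetric case and hence the fact.

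\textbf{Main obstacle.} The result is essentially the Araki--Yamagami inequality and no step is genuinely hard; the only care required is the sign bookkeeping in the expansion of $(X_+ - X_-)(Y_+ - Y_-)$ and the clean use of Fact~\ref{rmk:matrix-absolute-value-unique} to read off $|\tilde A|$ from its block-diagonal square. The $\sqrt{2}$ factor arises precisely from the fact that the embedding records both $|A|$ and $|A^T|$, so the left-hand side of the reduction step is only bounded by $\||\tilde A|-|\tilde B|\|_F^2$, not by one half of it.
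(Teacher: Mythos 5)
Your proof is correct, and it differs from the paper in that the paper does not prove this fact at all: it simply cites \citet{bhatia2013matrix} (Theorem VII.5.7 and eq.\ (VII.39)), where this is the Araki--Yamagami inequality for the Hilbert--Schmidt norm. Your argument supplies a self-contained elementary derivation. I checked the details: the block embedding is handled correctly ($\tilde A^T\tilde A = \tilde A^2 = \diag(AA^T, A^TA)$, so Fact~\ref{rmk:matrix-absolute-value-unique} gives $|\tilde A| = \diag(|A^T|,|A|)$, and $\|\tilde A - \tilde B\|_F^2 = 2\|A-B\|_F^2$); the trace identity
$$
\|X-Y\|_F^2 - \|\,|X|-|Y|\,\|_F^2 = 2\,\mathrm{tr}\bigl(|X|\,|Y| - XY\bigr) = 4\,\mathrm{tr}(X_+Y_- + X_-Y_+)
$$
is right, with $X_\pm, Y_\pm \succeq 0$ from the spectral decomposition; and $\mathrm{tr}(PQ)=\mathrm{tr}(P^{1/2}QP^{1/2})\ge 0$ for $P,Q\succeq 0$ closes the symmetric case with the sharper constant $1$, from which the $\sqrt 2$ in the general case follows. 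What your approach buys is transparency about where the constant comes from (the doubling in the dilation, exactly as you note); what the citation buys the authors is brevity and access to the sharper forms in Bhatia (e.g., optimality of $\sqrt 2$ and the constant-$1$ bound for normal matrices, which your symmetric-case lemma is a special case of). Either is acceptable here, since the fact is only used once, to transfer the Frobenius bound of Theorem~\ref{thm:acsbm-tilde-b-consistency} from $\hat{\tilde B} - T\tilde B T^{-1}$ to $|\hat{\tilde B}| - |T\tilde B T^{-1}|$.
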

\begin{proof}
See \citet{bhatia2013matrix}, Theorem VII.5.7 and eq. (VII.39).
\end{proof}

We recall our definition of the binary matrix operator $\boxplus$.

\begin{definition}
\label{def:kron-combine}
Let $A \in \R^{m \times m}, B \in \R^{n \times n}$. Then:
$$
A \boxplus B = (A \otimes \vone_n \vone_n^T) + (\vone_m \vone_m^T \otimes B) .
$$
\end{definition}

The operation $\boxplus$ is similar to the more standard Kronecker sum $A \oplus B = (A \otimes I_n) + (I_m \otimes B)$, but with identity matrices replaced by $\vone \vone^T$. Fact~\ref{thm:exp-kron-combine} below also resembles a property that the Kronecker sum satisfies, but replacing the matrix exponential with an element-wise exponential.

\begin{fact}
\label{thm:exp-kron-combine}
For two square matrices $A$ and $B$, $\exp(A \boxplus B) = \exp(A) \otimes \exp(B)$, where $\exp$ is evaluated element-wise.
\end{fact}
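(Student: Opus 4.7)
The plan is to verify the identity entrywise, exploiting the fact that element-wise $\exp$ turns sums into products and that the Kronecker product $\otimes$ acts multiplicatively on entries. Since $A \boxplus B$ is built from $A$ and $B$ using $\otimes$ together with $\vone \vone^T$ factors, each entry of $A \boxplus B$ is simply a sum of one entry of $A$ and one entry of $B$, which is precisely what we need.

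Concretely, for $A \in \R^{m \times m}$ and $B \in \R^{n \times n}$, I would index the rows and columns of the $(mn) \times (mn)$ matrix $A \boxplus B$ by pairs $(i,k)$ and $(j,l)$ with $i,j \in [m]$, $k,l \in [n]$, following the standard Kronecker convention $(M \otimes N)_{(i,k),(j,l)} = M_{ij} N_{kl}$. First I would compute the entries of each summand in $A \boxplus B = (A \otimes \vone_n \vone_n^T) + (\vone_m \vone_m^T \otimes B)$: the first summand contributes $A_{ij} \cdot (\vone_n \vone_n^T)_{kl} = A_{ij}$, and the second contributes $(\vone_m \vone_m^T)_{ij} \cdot B_{kl} = B_{kl}$, so $(A \boxplus B)_{(i,k),(j,l)} = A_{ij} + B_{kl}$.

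Next, applying $\exp$ element-wise gives $\exp(A \boxplus B)_{(i,k),(j,l)} = \exp(A_{ij} + B_{kl}) = \exp(A_{ij}) \exp(B_{kl})$. On the other hand, the definition of the Kronecker product applied to $\exp(A)$ and $\exp(B)$ (again with $\exp$ element-wise) yields $(\exp(A) \otimes \exp(B))_{(i,k),(j,l)} = \exp(A)_{ij} \exp(B)_{kl} = \exp(A_{ij}) \exp(B_{kl})$. Since the two matrices agree entrywise, they are equal.

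There is no real obstacle here; the identity is essentially a bookkeeping exercise once the index conventions for $\otimes$ and $\boxplus$ are pinned down. The only minor point to be careful about is consistency of the index convention between the two Kronecker terms inside $\boxplus$ and the outer Kronecker product $\exp(A) \otimes \exp(B)$, but both use the same convention, so no issue arises.
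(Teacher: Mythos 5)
Your proof is correct and is essentially the same argument as the paper's: the paper packages the entrywise bookkeeping into the identity $A \otimes B = (A \otimes \vone_n \vone_n^T) \odot (\vone_m \vone_m^T \otimes B)$ and uses that element-wise $\exp$ turns sums into Hadamard products, which is exactly your observation that $(A \boxplus B)_{(i,k),(j,l)} = A_{ij} + B_{kl}$ expressed at the matrix level.
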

\begin{proof}
Observe that the Kronecker product of two square matrices $A \in \R^{m \times m}$ and $B \in \R^{n \times n}$ may be written $A \otimes B = (A \otimes \vone_n \vone_n^T) \odot (\vone_m \vone_m^T \otimes B)$, where $\odot$ denotes the Hadamard product (i.e., element-wise multiplication). From here it follows that:
$$
\begin{aligned}
\exp(A \boxplus B) &= \exp(A \otimes \vone_n \vone_n^T + \vone_m \vone_m^T \otimes B) \\
    &= \exp(A \otimes \vone_n \vone_n^T) \odot \exp(\vone_m \vone_m^T \otimes B) \\
    &= \left( \exp(A) \otimes \vone_n \vone_n^T \right) \odot \left( \vone_m \vone_m^T \otimes \exp(B) \right) \\
    &= \exp(A) \otimes \exp(B) .
\end{aligned}
$$
\end{proof}

In light of the Kronecker representation of $\exp(A \boxplus B)$, we review some facts about Kronecker products and inspect their matrix absolute values.

\begin{fact}
\label{thm:symmetric-kroneckers}
If $A = A^T$ and $B = B^T$, then $A \otimes B = (A \otimes B)^T$.
\end{fact}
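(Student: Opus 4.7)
The plan is to invoke the standard transpose identity for Kronecker products, namely $(A \otimes B)^T = A^T \otimes B^T$, which can be seen directly from the block definition of the Kronecker product: $(A \otimes B)_{(i,k),(j,\ell)} = A_{ij} B_{k\ell}$, so swapping row and column indices gives $A_{ji} B_{\ell k} = (A^T)_{ij}(B^T)_{k\ell}$, which is the $((i,k),(j,\ell))$-entry of $A^T \otimes B^T$.

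Once that identity is in hand, the result is immediate: substituting the symmetry hypotheses $A = A^T$ and $B = B^T$ yields
\[
(A \otimes B)^T \;=\; A^T \otimes B^T \;=\; A \otimes B,
\]
so $A \otimes B$ is symmetric. There is really no obstacle here; the only choice is whether to quote the transpose identity as standard or to derive it in one line from the entrywise definition. Given that the surrounding facts (e.g., Fact~\ref{thm:exp-kron-combine}) freely use similar properties of Kronecker products, a one-line citation of the transpose identity followed by the two-step equality above suffices.
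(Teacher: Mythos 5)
Your proof is correct and takes the same route as the paper, which simply cites the standard identity $(A \otimes B)^T = A^T \otimes B^T$ (from Horn and Johnson) and substitutes the symmetry hypotheses. Your optional entrywise derivation of that identity is a fine addition but not a different approach.
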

\begin{proof}
By \citet[][eq.~4.2.5]{horn1991topics}, $(A \otimes B)^T = A^T \otimes B^T = A \otimes B$.
\end{proof}

\begin{fact}
\label{thm:svds-to-absolute-value-of-kronecker}
Let $A = A^T, B = B^T$ with eigendecompositions $A = U \Lambda U^T, B = V \Psi V^T$. If $C = A \otimes B$, then:
$$
|C| = (U \otimes V) |\Lambda \otimes \Psi| (U \otimes V)^T = |A| \otimes |B|.
$$
\end{fact}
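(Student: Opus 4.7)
The plan is to assemble an eigendecomposition of $C = A \otimes B$ out of the given eigendecompositions of $A$ and $B$, then invoke Fact~\ref{thm:absolute-value-orthogonal-conjugation} (and the easy case of a diagonal matrix) to read off $|C|$.

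First, I would note that since $U, V$ are orthogonal, so is $U \otimes V$: by the mixed-product property of the Kronecker product,
$$
(U \otimes V)(U \otimes V)^T = (U \otimes V)(U^T \otimes V^T) = UU^T \otimes VV^T = I \otimes I = I.
$$
Applying the mixed-product property twice more gives
$$
C = A \otimes B = (U \Lambda U^T) \otimes (V \Psi V^T) = (U \otimes V)(\Lambda \otimes \Psi)(U \otimes V)^T,
$$
which is an eigendecomposition of $C$ since $\Lambda \otimes \Psi$ is diagonal. Because $U \otimes V$ is orthogonal, Fact~\ref{thm:absolute-value-orthogonal-conjugation} yields
$$
|C| = (U \otimes V)\, |\Lambda \otimes \Psi|\, (U \otimes V)^T,
$$
establishing the first equality.

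For the second equality, I would observe that $\Lambda \otimes \Psi$ is diagonal with entries $\lambda_i \psi_j$, so its absolute value (in the sense of Definition of matrix absolute value applied to a diagonal matrix) is the diagonal matrix with entries $|\lambda_i \psi_j| = |\lambda_i|\,|\psi_j|$, i.e., $|\Lambda \otimes \Psi| = |\Lambda| \otimes |\Psi|$. Substituting back and applying the mixed-product property one last time gives
$$
|C| = (U \otimes V)(|\Lambda| \otimes |\Psi|)(U \otimes V)^T = (U|\Lambda|U^T) \otimes (V|\Psi|V^T) = |A| \otimes |B|,
$$
where the final step uses the fact that for a symmetric matrix $A = U\Lambda U^T$ we have $|A| = U|\Lambda|U^T$ (immediate from the definition, or from Fact~\ref{thm:absolute-value-orthogonal-conjugation} applied to the diagonal $\Lambda$).

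There is no real obstacle here; the proof is essentially bookkeeping with the mixed-product property plus one application of Fact~\ref{thm:absolute-value-orthogonal-conjugation}. The only subtlety worth flagging is that $\Lambda \otimes \Psi$ may have negative entries, so one should not confuse it with a positive semidefinite matrix; the identity $|\Lambda \otimes \Psi| = |\Lambda| \otimes |\Psi|$ is where the entrywise absolute values are actually taken.
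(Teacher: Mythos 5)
Your proof is correct, and it takes a cleaner route than the paper's. The paper manufactures singular value decompositions $A = U|\Lambda|(\sign(\Lambda)U^T)$ and $B = V|\Psi|(\sign(\Psi)V^T)$, invokes the Kronecker-product SVD theorem of Horn and Johnson to get an SVD of $C$, checks symmetry of $C$ via Fact~\ref{thm:symmetric-kroneckers}, and then applies Fact~\ref{thm:matrix-svd-to-absolute-value} to convert the SVD into $|C|$. You skip the SVD detour entirely: the mixed-product property gives $C = (U\otimes V)(\Lambda\otimes\Psi)(U\otimes V)^T$ directly as an eigendecomposition, orthogonality of $U\otimes V$ plus Fact~\ref{thm:absolute-value-orthogonal-conjugation} (or simply the paper's definition of $|\cdot|$ for symmetric matrices) yields the first equality, and the diagonal identity $|\Lambda\otimes\Psi| = |\Lambda|\otimes|\Psi|$ plus one more mixed-product step gives the second. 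Your approach needs one fewer external citation and does not need to separately verify that $C$ is symmetric (it is manifest from the conjugated-diagonal form). The two arguments buy the same result; the paper's is the one you would reach for if $A$ and $B$ were not symmetric and only SVDs were available, but under the stated symmetry hypotheses yours is the more economical. Your closing remark correctly flags the one genuine subtlety, namely that $\Lambda\otimes\Psi$ can have negative diagonal entries so the entrywise absolute value is doing real work there.
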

\begin{proof}
We begin by writing SVDs for $A$ and $B$, namely:
$$
\begin{aligned}
A &= U |\Lambda| (\sign(\Lambda) U^T) \\
B &= V |\Psi| (\sign(\Psi) V^T) ,
\end{aligned}
$$
where $\sign(\cdot)$ is taken element-wise. It is easy to verify that $\sign(\Lambda) U^T$ and $\sign(\Psi) V^T$ are indeed orthogonal.

Armed with these decompositions, we may apply \citet[][Theorem~4.2.15]{horn1991topics} to find an SVD for C:
$$
\begin{aligned}
C &= ( U \otimes V ) ( |\Lambda| \otimes |\Psi| ) ( \sign(\Lambda)U^T \otimes \sign(\Psi) V^T) \\
    &= ( U \otimes V ) |\Lambda \otimes \Psi| ( \sign(\Lambda)U^T \otimes \sign(\Psi) V^T)
\end{aligned}
$$
Since $A = A^T$ and $B = B^T$, we have that $C = C^T$ (Fact~\ref{thm:symmetric-kroneckers}). Therefore:
$$
\begin{aligned}
|C| &= (U \otimes V) |\Lambda \otimes \Psi| (U \otimes V)^T \quad \text{(Fact~\ref{thm:matrix-svd-to-absolute-value})} \\
    &= (U \otimes V) (|\Lambda| \otimes |\Psi|) (U \otimes V)^T \\
    &= (U |\Lambda| \otimes V |\Psi|) \otimes (U^T \otimes V^T) \\
    &= (U |\Lambda| U^T) \otimes (V |\Psi| V^T) \\
    &= |A| \otimes |B| .
\end{aligned}
$$
\end{proof}

Finally, we give two useful facts about sums and permutations.

\begin{fact}
\label{thm:sum-of-permuted-products}
Let $x_1, \dots, x_n \in \R$. Then for any $\sigma \in S_{[n]}$:
$$
\sum_{i=1}^n x_i x_{\sigma(i)} \leq \sum_{i=1}^n x_i^2 .
$$
\end{fact}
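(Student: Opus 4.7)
The plan is to reduce the inequality to either the Cauchy--Schwarz inequality or, more simply, to the pointwise AM--GM bound $2ab \leq a^2 + b^2$, exploiting the fact that any permutation $\sigma$ preserves the sum of squares.

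First, I would observe the trivial identity $\sum_{i=1}^n x_{\sigma(i)}^2 = \sum_{i=1}^n x_i^2$, which holds because $\sigma$ is a bijection $[n] \to [n]$ and therefore just reindexes the sum. This symmetry is the only property of $\sigma$ that is really needed.

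Next, I would apply the elementary inequality $ab \leq \tfrac{1}{2}(a^2 + b^2)$ termwise with $a = x_i$ and $b = x_{\sigma(i)}$ to get
$$\sum_{i=1}^n x_i x_{\sigma(i)} \leq \frac{1}{2} \sum_{i=1}^n x_i^2 + \frac{1}{2} \sum_{i=1}^n x_{\sigma(i)}^2 = \sum_{i=1}^n x_i^2,$$
where the last step uses the reindexing identity from the previous paragraph. Alternatively, Cauchy--Schwarz yields $\sum_i x_i x_{\sigma(i)} \leq \bigl(\sum_i x_i^2\bigr)^{1/2} \bigl(\sum_i x_{\sigma(i)}^2\bigr)^{1/2}$, which collapses to the same bound via the same reindexing.

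There is no real obstacle here; the only thing to be careful about is noting that the $x_i$ can be negative, so one cannot invoke the rearrangement inequality directly without a sign argument, but AM--GM and Cauchy--Schwarz both sidestep this issue since they hold for arbitrary reals.
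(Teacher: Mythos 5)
Your proof is correct. Your lead argument — applying $ab \leq \tfrac{1}{2}(a^2+b^2)$ termwise and then using the reindexing identity $\sum_i x_{\sigma(i)}^2 = \sum_i x_i^2$ — is a slightly more elementary route than the paper's, which applies Cauchy--Schwarz once to the full sums to get $\bigl(\sum_i x_i x_{\sigma(i)}\bigr)^2 \leq \bigl(\sum_i x_i^2\bigr)\bigl(\sum_i x_{\sigma(i)}^2\bigr) = \bigl(\sum_i x_i^2\bigr)^2$ and then takes square roots. Your stated alternative via Cauchy--Schwarz is exactly the paper's proof, so the two are essentially interchangeable; the termwise AM--GM version has the minor advantage of avoiding the square-root step entirely, while the paper's version is a one-line invocation of a single standard inequality. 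Your closing remark that the rearrangement inequality cannot be used directly because the $x_i$ may be negative is a sensible caution, though neither proof needs it.
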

\begin{proof}
This is an application of Cauchy--Schwarz in disguise:
$$
\begin{aligned}
\left( \sum_{i=1}^n x_i x_{\sigma(i)} \right)^2
    &\leq \left( \sum_{i=1}^n x_i^2 \right) \left( \sum_{i=1}^n x_{\sigma(i)}^2 \right) \\
    &= \left( \sum_{i=1}^n x_i^2 \right)^2 .
\end{aligned}
$$
The final statement comes by taking the square root of both sides.
\end{proof}

\begin{fact}
\label{thm:trace-of-psd-matrix-vs-permutation}
Let $A \in \R^{n \times n}$ such that $A \succeq 0$. Then for any $\sigma \in S_{[n]}$:
$$
\sum_{i=1}^n A_{i \sigma(i)} \leq \sum_{i=1}^n A_{ii} .
$$
Moreover, if $\rank(A) = n$ and $\sigma \neq \id$, the inequality is strict.
\end{fact}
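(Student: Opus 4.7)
The plan is to exploit the Gram matrix structure of positive semi-definite matrices: since $A \succeq 0$, we can write $A = X^T X$ for some $X \in \R^{n \times n}$ (e.g.,\ $X = A^{1/2}$), so that $A_{ij} = \langle x_i, x_j \rangle$ for $x_1, \dots, x_n$ the columns of $X$. This reduces the statement to an inequality about inner products of vectors, which is prime territory for Cauchy--Schwarz combined with the previously stated Fact~\ref{thm:sum-of-permuted-products}.

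Concretely, I would proceed in two steps. First, for each $i$, apply Cauchy--Schwarz:
$$
A_{i\sigma(i)} = \langle x_i, x_{\sigma(i)} \rangle \leq \| x_i \|_2 \| x_{\sigma(i)} \|_2 .
$$
Summing over $i$ gives $\sum_i A_{i\sigma(i)} \leq \sum_i \| x_i \|_2 \| x_{\sigma(i)} \|_2$. Second, apply Fact~\ref{thm:sum-of-permuted-products} with the scalars $\| x_1 \|_2, \dots, \| x_n \|_2$ to obtain
$$
\sum_{i=1}^n \| x_i \|_2 \| x_{\sigma(i)} \|_2 \leq \sum_{i=1}^n \| x_i \|_2^2 = \sum_{i=1}^n A_{ii} ,
$$
which chains together to give the weak inequality.

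For the strict inequality under $\rank(A) = n$ and $\sigma \neq \id$: full rank of $A$ means $X$ has full column rank, so the columns $x_1, \dots, x_n$ are linearly independent (and in particular nonzero and pairwise non-collinear). Since $\sigma \neq \id$, there exists $i$ with $\sigma(i) \neq i$; for this index $x_i$ and $x_{\sigma(i)}$ are linearly independent, so Cauchy--Schwarz gives the strict bound $\langle x_i, x_{\sigma(i)} \rangle < \| x_i \|_2 \| x_{\sigma(i)} \|_2$, and the first inequality becomes strict.

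I do not expect any real obstacle here; the only minor subtlety is confirming that the square root $A = X^T X$ may be chosen to have full column rank whenever $A$ has rank $n$, which is immediate from taking $X = A^{1/2}$ and noting $\rank(A^{1/2}) = \rank(A)$. Everything else is a direct invocation of Cauchy--Schwarz and Fact~\ref{thm:sum-of-permuted-products}.
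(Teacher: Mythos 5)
Your proof is correct and follows essentially the same route as the paper's: write $A$ as a Gram matrix, bound each off-diagonal entry by Cauchy--Schwarz, and finish with Fact~\ref{thm:sum-of-permuted-products}, with strictness coming from the failure of Cauchy--Schwarz equality for linearly independent vectors. The only difference is cosmetic (you factor $A = X^T X$ and work with columns, the paper uses $A = XX^T$ and rows).
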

\begin{proof}
Since $A \succeq 0$, let $A = XX^T$. Fix $\sigma \in S_{[n]}$. Then:
$$
\begin{aligned}
\sum_{i=1}^n A_{i \sigma(i)} &= \sum_{i=1}^n e_i^T A e_{\sigma(i)} \\
    &= \sum_{i=1}^n \langle X^T e_i, X^T e_{\sigma(i)} \rangle \\
    \textcircled{a} &\leq \sum_{i=1}^n \| X^T e_i \|  \| X^T e_{\sigma(i)} \| \quad \text{(Cauchy--Schwarz)} \\
    &\leq \sum_{i=1}^n \| X^T e_i \|^2 \quad \text{(Fact~\ref{thm:sum-of-permuted-products})} \\
    &= \sum_{i=1}^n \langle X^T e_i, X^T e_i \rangle \\
    &= \sum_{i=1}^n e_i^T A e_i = \sum_{i=1}^n A_{ii} .
\end{aligned}
$$
If $\sigma \neq \id$, the inequality \textcircled{a} is made strict  when $X$ has linearly independent rows, i.e., when $A$ is full-rank.
\end{proof}

\subsection{Proofs of Results}

\subsubsection{Representation Results}

We prove that ACSBM can be represented as an SBM by explicitly constructing such a representation.

\begin{proof}[Proof of Proposition~\ref{thm:acsbm-representation}]
Consider first the case when $M=1$, i.e., $Z = Z_{*1}$. Every edge is an independent Bernoulli random variable whose probability depends on $(\theta_i, Z_{i1})$ and $(\theta_j, Z_{j1})$. It will be convenient to map these tuples to scalars. Let $\tau(k, \ell) = L_1 (k-1) + \ell$, a bijection from $[K] \times [L_1]$ to $[KL_1]$. Let $\tilde \theta^{(1)} \in [KL_1]^n = (\tau(\theta_i, Z_{1i}))_{i=1}^n$. We will now write the edge probabilities in terms of these new scalar quantities. It can be shown (if a bit tediously) that:
$$
\begin{aligned}
\Pr(Y_{ij} = 1 \mid \tilde \theta^{(1)}_i = t_1, \tilde \theta^{(1)}_j = t_2) &= g^{-1} \left( \, [B \otimes \vone_{L_1} \vone_{L_1}^T + \vone_K \vone_K^T \otimes \beta_1 I_{L_1} ]_{t_1 t_2} \, \right) \\
    &= \left[ \, g^{-1}(B \boxplus \beta_1 I_{L_1}) \, \right]_{t_1 t_2} ,
\end{aligned}
$$
where $g^{-1}$ is taken element-wise in the final line. This is precisely the form of the SBM given in Definition~\ref{def:acsbm-base-sbm}. Thus when $M=1$, we can say $Y$ is equal to an SBM with $\tilde L = KL_1$ communities, $\tilde \theta = L_1(\theta - \vone_n) + Z_{*1}$, and edge probabilities $\tilde B = g^{-1}(B \boxplus \beta_1 I_{L_1})$.

The case when $M \geq 2$ follows inductively. Let $Y_1 \sim \ACSBM(\theta, B, Z_1, \beta_1, g) \overset D= \SBM(\tilde \theta^{(1)}, \tilde B^{(1)})$. Define $Y_2 = \ACSBM(\theta, B, [Z_1 \mid Z_2], (\beta_1, \beta_2)^T, g)$. This network is equal in distribution to $Y_2' \sim \ACSBM(\tilde \theta^{(1)}, g(\tilde B^{(1)}), Z_2, \beta_2, g)$. By the $M=1$ case above, these networks are equal in distribution to an SBM with $K L_1 L_2$ communities:
$$
\tilde \theta^{(2)} = L_2 (\tilde \theta^{(1)} - \vone_n) + Z_{*2} = L_2 (L_1 (\theta - \vone_n) + Z_{*1} - \vone_n) + Z_{*2}
$$
and edge probabilities:
$$
g^{-1} \left( \, g(\tilde B^{(1)}) \boxplus \beta_2 I_{L_2} \, \right) = g^{-1}(B \boxplus \beta_1 I_{L_1} \boxplus \beta_2 I_{L_2}) ,
$$
where once again, $g$ and $g^{-1}$ are element-wise.

Proceed inductively to find the forms of $Y_3, \dots, Y_M$, defined analogously to $Y_2$, so that $Y \overset D= Y_M$.
\end{proof}

The gRDPG representation now follows immediately as a corollary.

\begin{proof}[Proof of Proposition~\ref{thm:acsbm-grdpg-representation}]
By Proposition~\ref{thm:acsbm-representation}, we may represent $Y$ as an SBM, i.e., $Y \overset D= \SBM(\tilde \theta, \tilde B)$. The ability to represent an SBM as a gRDPG using latent positions derived from spectral decomposition is a well established practice in the gRDPG literature, e.g., \citet[][Section~2.1]{rubin2017statistical}. Thus Proposition~\ref{thm:acsbm-grdpg-representation} follows as a corollary to Proposition~\ref{thm:acsbm-representation}.
\end{proof}

\subsubsection{Consistency of Part 1}

\begin{proof}[Proof of Theorem~\ref{thm:acsbm-balls}]
By Lemma~\ref{thm:acsbm-embedding-concentration}, we know that:
$$
\max_{i \in [n]} \| Q \hat X_i - X_{\tilde B}(\theta_i, Z_i) \|_2 = O_P \left( \frac{\log^c n}{\sqrt{n}} \right)
$$
for some sequence of matrices $Q \in \Orth(p, q)$. We might prefer a statement in terms of $\hat X_i$, rather than $Q \hat X_i$, which we can make as follows:
$$
\max_{i \in [n]} \| \hat X_i - Q X_{\tilde B}(\theta_i, Z_i) \|_2 \leq \| Q^{-1} \|_2 \left( \max_{i \in [n]} \| Q \hat X_i - X_{\tilde B}(\theta_i, Z_i) \|_2 \right).
$$
We have seemingly done little here but move the troublesome $Q$ and impose an additional nuisance term. However, \citet[][Lemma~5]{rubin2017statistical} states a key result: $\| Q \|_2$ and $ \| Q^{-1} \|_2$ are bounded almost surely. This allows us to eliminate the nuisance term:
$$
\max_{i \in [n]} \| \hat X_i - Q X_{\tilde B}(\theta_i, Z_i) \|_2 = O_P \left( \frac{\log^c n}{\sqrt{n}} \right) .
$$

We still have to grapple with $QX_{\tilde B}$. Observe that for $z$ fixed, the canonical latent positions $X_{\tilde B}(1, z), \dots, X_{\tilde B}(K, z)$ are distinct by construction. Since $Q$ is full-rank, this also applies to $QX_{\tilde B}(1, z), \dots, QX_{\tilde B}(K, z)$. Moreover, in light of the bounded spectral norms of $Q$ and $Q^{-1}$, which bound the singular values of $Q$ in an interval away from zero, the asymptotic distortion of distances is limited. In particular, $\| Q (X_{\tilde B}(k_1, z) - X_{\tilde B}(k_2, z) ) \|_2 = \Theta(\sqrt{\alpha_n})$ almost surely. Combining these facts yields the result, as follows.

Let $\mathcal B(x, r)$ denote a ball centered at $x$ with radius $r$. From our argument above, there exists a sequence of radii $r = O_P(\log^c n/\sqrt{n})$ such that $\hat X_i \in \mathcal B(Q X_{\tilde B}(\theta_i, z), r)$ for all $i \in \mathcal I_z$. Since $\| Q (X_{\tilde B}(k_1, z) - X_{\tilde B}(k_2, z) ) \|_2$ scales with $\sqrt{\alpha_n} = \omega(\log^{2c}n/\sqrt{n})$, these balls shrink in size faster than they converge to the origin. More concretely, let $\mathcal B_{k,z} = \mathcal B(Q X_{\tilde B}(k, z), r)$ for $k \in [K]$. Then for any $k_1, k_2 \in [K]$:
$$
\Pr( \mathcal B_{k_1,z} \cap \mathcal B_{k_2,z} = \emptyset ) = \Pr \left( r < \frac 12 \| QX_{\tilde B}(k_1, z) - QX_{\tilde B}(k_2, z) \|_2 \right) \to 1 ,
$$
since $\| QX_{\tilde B}(k_1, z) - QX_{\tilde B}(k_2, z) \|_2 = \Theta(\sqrt{\alpha_n})$ almost surely, and $r = o_P(\sqrt{\alpha_n})$.
\end{proof}

\subsubsection{Consistency of Part 2}

\begin{proof}[Proof of Theorem~\ref{thm:acsbm-tilde-b-consistency}]
Suppose $Y_{gen} \sim \SBM(\tilde \theta, B_{gen})$ for some symmetric matrix $B_{gen} \in \R^{K \tilde L \times K \tilde L}$. This model is more general than $Y \sim \SBM(\tilde \theta, \tilde B)$. Suppose we have a perfect estimate of $\tilde \theta$ (up to a permutation), and we wish to estimate $B_{gen}$. In this case, the natural approach to estimating $B_{gen}$ via the empirical density of each block is precisely the maximum likelihood estimator, which has been well-studied \citep[e.g.,][]{bickel2013asymptotic}.

Under the theorem hypothesis, we have indeed recovered $\tilde \theta$ up to a permutation of labels. This is true since $\tilde \theta((\tau_{z_i} \circ \hat \theta_{z_i})(i)), z_i) = \tilde \theta_i$ for all $i$, and the function $\tilde \theta(\cdot, \cdot)$ is a bijection. Let $\tau \in S_{[K \tilde L]}$ denote this permutation, and let $T$ denote the corresponding permutation matrix. Then $ T^{-1} \hat {\tilde B} T$ is the maximum likelihood estimator for a model $Y_{gen} \sim \SBM(\tilde \theta, B_{gen})$, and so we may apply the maximum likelihood results of \citet[][Lemma~1]{bickel2013asymptotic} or, more conveniently, \citet[][Theorem~1]{tang2022asymptotically}. Per these results, we can say that for any $k_1, k_2 \in [K \tilde L]$:
$$
n \alpha_n^{-1/2} \left( (T^{-1} \hat{\tilde B} T)_{k_1 k_2} - \tilde B_{k_1 k_2} \right) \overset D\longrightarrow \mathcal N(0, v_{k_1 k_2}),
$$
where $\overset D\longrightarrow \mathcal N(\cdot, \cdot)$ denotes convergence in distribution to the normal distribution, and $v_{k_1 k_2} > 0$ is a constant depending on $k_1$ and $k_2$. In other words:
$$
(T^{-1} \hat{\tilde B} T)_{k_1 k_2} - \tilde B_{k_1 k_2} = O_P \left( \frac{\sqrt{\alpha_n}}{n} \right).
$$
Since $\tilde B$ scales with $\alpha_n$, we rewrite this to be in terms of the constant quantity $\alpha_n^{-1} \tilde B$:
$$
\alpha_n^{-1} \left( (T^{-1} \hat{\tilde B} T)_{k_1 k_2} - \tilde B_{k_1 k_2} \right) = O_P \left( \frac{1}{n \sqrt{\alpha_n}} \right) = o_P \left( \frac{1}{\sqrt{n \log^c n}} \right) .
$$
Since $K$ and $\tilde L$ are kept constant in $n$, these entrywise bounds may be taken as a bound for the Frobenius norm, $\| T^{-1} \hat{\tilde B} T - \tilde B \|_F$. Moreover, since the Frobenius norm is unitarily invariant, we may write:
$$
\| \hat{\tilde B} - T \tilde B T^{-1} \|_F = o_P \left(\frac{1}{\sqrt{n \log^c n}}\right) .
$$
\end{proof}

\subsubsection{Consistency of Part 3}

We first show that the matching problem selects the appropriate permutations in the absence of estimation error, i.e., when applied to the true latent positions $X_{\tilde B}$. Note that the role of the permutation $\sigma$ in Theorem~\ref{thm:acsbm-assignment-true-positions} below differs slightly from its role in Algorithm~\ref{alg:spectral-acsbm}. In the algorithm, there is an unknown permutation that we are looking to reverse for each choice of $z$; in the theorem below, there is no such permutation, so the correct choice of $\sigma$ is the identity permutation.

\begin{theorem}
\label{thm:acsbm-assignment-true-positions}
Assume $Y$ from the setting of Section~\ref{acsbm-consistency-results}. Let $X_{\tilde B}$ as in Proposition~\ref{thm:acsbm-representation}. For any fixed $z \in [L_1] \times \dots \times [L_M]$:
\begin{equation}
\label{eq:acsbm-assignment-optimization-true}
\arg \min_{\sigma \in S_{[K]}} \sum_{k = 1}^K \| X_{\tilde B}(\sigma(k), z) - X_{\tilde B}(k, \vone_M) \|_2^2 = \id .
\end{equation}
Moreover, if $\exp(B)$ is full-rank, $\sigma = \id$ is the unique minimizer.
\end{theorem}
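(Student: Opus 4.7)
The plan is to expand the squared-distance objective into a constant part plus an inner-product part, then use the Kronecker structure of $\tilde B$ under the log link to make the dependence on $\sigma$ fully transparent. Concretely, I would first write
$$
\sum_{k=1}^K \| X_{\tilde B}(\sigma(k), z) - X_{\tilde B}(k, \vone_M) \|_2^2 = \sum_k \| X_{\tilde B}(\sigma(k), z) \|_2^2 + \sum_k \| X_{\tilde B}(k, \vone_M) \|_2^2 - 2 \sum_k \langle X_{\tilde B}(\sigma(k), z), X_{\tilde B}(k, \vone_M) \rangle,
$$
noting that the first sum is invariant under $\sigma$ (it is a sum over $k \in [K]$ of norms, just reordered) and the second does not involve $\sigma$ at all. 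So the optimization reduces to maximizing $\sum_k \langle X_{\tilde B}(\sigma(k), z), X_{\tilde B}(k, \vone_M) \rangle$.

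Next, since $X_{\tilde B} = U_{\tilde B} |\Lambda_{\tilde B}|^{1/2}$, the Euclidean Gram matrix of rows of $X_{\tilde B}$ is $X_{\tilde B} X_{\tilde B}^T = U_{\tilde B}|\Lambda_{\tilde B}|U_{\tilde B}^T = |\tilde B|$. Thus each inner product equals an entry of $|\tilde B|$, indexed via the bijection $\tilde\theta$. Under the log link, Fact~\ref{thm:exp-kron-combine} gives
$$
\tilde B = \alpha_n \, \exp(B_0) \otimes \exp(\beta_1 I_{L_1}) \otimes \dots \otimes \exp(\beta_M I_{L_M}),
$$
and iterating Fact~\ref{thm:svds-to-absolute-value-of-kronecker} yields
$$
|\tilde B| = \alpha_n \, |\exp(B_0)| \otimes |\exp(\beta_1 I_{L_1})| \otimes \dots \otimes |\exp(\beta_M I_{L_M})|.
$$
Reading off the block indexed by $(\sigma(k), z)$ and $(k, \vone_M)$, the covariate factors separate out and do not depend on $k$ or $\sigma$, so
$$
\sum_k \langle X_{\tilde B}(\sigma(k), z), X_{\tilde B}(k, \vone_M) \rangle = C_z \sum_{k=1}^K |\exp(B_0)|_{\sigma(k),\, k}, \qquad C_z := \alpha_n \prod_{m=1}^M |\exp(\beta_m I_{L_m})|_{z^{(m)},\, 1}.
$$
Each factor $\exp(\beta_m I_{L_m})$ has the form $\vone_{L_m}\vone_{L_m}^T + (e^{\beta_m}-1) I_{L_m}$ with all entries strictly positive, so by Fact~\ref{thm:positive-absolute-value-of-one-one-plus-id} each $|\exp(\beta_m I_{L_m})|$ is also entrywise positive, giving $C_z > 0$.

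It then remains to maximize $\sum_k |\exp(B_0)|_{\sigma(k),k}$ over $\sigma \in S_{[K]}$. Because $|\exp(B_0)|$ is positive semi-definite by construction, Fact~\ref{thm:trace-of-psd-matrix-vs-permutation} immediately gives $\sum_k |\exp(B_0)|_{\sigma(k),k} \le \sum_k |\exp(B_0)|_{k,k}$ with the identity achieving the bound, so $\sigma = \id$ minimizes (\ref{eq:acsbm-assignment-optimization-true}). For the uniqueness claim, observe that $\exp(B) = \alpha_n \exp(B_0)$, so $\exp(B)$ full-rank implies $\exp(B_0)$ full-rank, which in turn makes $|\exp(B_0)|$ full-rank; the strict-inequality half of Fact~\ref{thm:trace-of-psd-matrix-vs-permutation} then rules out every $\sigma \neq \id$. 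The one part that requires a little care rather than just invoking prior facts is verifying $C_z > 0$ and confirming the covariate piece factors cleanly out of the indexing; everything else is a direct bookkeeping reduction to the PSD trace inequality.
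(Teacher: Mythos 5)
Your proposal is correct and follows essentially the same route as the paper's proof: expand the squared distances so that only the cross term depends on $\sigma$, identify the inner products as entries of $|\tilde B|$, use the Kronecker factorization of $\tilde B$ under the log link to pull out a strictly positive covariate constant (via Fact~\ref{thm:positive-absolute-value-of-one-one-plus-id}), and conclude with the PSD trace/permutation inequality of Fact~\ref{thm:trace-of-psd-matrix-vs-permutation}, with full rank of $\exp(B)$ giving uniqueness. The only cosmetic difference is that you read off $X_{\tilde B} X_{\tilde B}^T = |\tilde B|$ directly from the eigendecomposition rather than citing Fact~\ref{thm:gram-matrix-of-absolute-value-when-diagonal}, which is an equivalent computation.
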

\begin{proof}
To simplify notation for the proof, let $x_{kz} = X_{\tilde B}(k, z)$. We begin by unpacking the squared norm:
$$
\begin{aligned}
\sum_{k = 1}^K \| x_{\sigma(k)z} - x_{k\vone} \|_2^2 &= \sum_{k = 1}^K \langle x_{\sigma(k)z} - x_{k\vone}, x_{\sigma(k)z} - x_{k\vone} \rangle \\
    &= \sum_{k = 1}^K \left( \langle x_{\sigma(k)z}, x_{\sigma(k)z} \rangle + \langle x_{k\vone}, x_{k\vone} \rangle - 2 \langle x_{\sigma(k)z}, x_{k\vone} \rangle \right) \\
    &=  \sum_{k = 1}^K \langle x_{kz}, x_{kz} \rangle + \sum_{k = 1}^K \langle x_{k\vone}, x_{k\vone} \rangle - 2 \sum_{k = 1}^K \langle x_{\sigma(k)z}, x_{k\vone} \rangle
\end{aligned}
$$
Since only the final sum depends on $\sigma$, the optimization problem (\ref{eq:acsbm-assignment-optimization-true}) is equivalent to finding:
$$
\arg \max_{\sigma \in S_{[K]}} \sum_{k = 1}^K \langle x_{\sigma(k)z}, x_{k\vone} \rangle .
$$
Fix $z \in [L_1] \times \dots \times [L_M]$, and let $\tilde B$ as in Proposition~\ref{thm:acsbm-representation}. Next, we will assemble yet another matrix. For any $k_1, k_2 \in [K]$, let $Q_{k_1 k_2} = \langle x_{k_1 z}, x_{k_2 \vone} \rangle$. If we can show that $Q \succeq 0$, the result will follow from Fact~\ref{thm:trace-of-psd-matrix-vs-permutation}. This is our plan. Observe that:
$$
\langle x_{k_1 z}, x_{k_2 \vone} \rangle_{pq} = \tilde B_{\tilde \theta(k_1, z), \tilde \theta(k_2, \vone)} ,
$$
where $(p, q)$ is the signature of the gRDPG corresponding to $Y$. Following from Fact~\ref{thm:gram-matrix-of-absolute-value-when-diagonal}, the inner products that form the entries of $Q$ can be found in $|\tilde B|$, i.e.:
$$
Q_{k_1 k_2} = \langle x_{k_1 z}, x_{k_2 \vone} \rangle = |\tilde B|_{\tilde \theta(k_1, z), \tilde \theta(k_2, \vone)}.
$$
Since $g = \log$, by Fact~\ref{thm:exp-kron-combine}, we can write $\tilde B$ like so:
$$
\tilde B = \exp(B) \otimes \exp(\beta_1 I_{L_1}) \otimes \dots \otimes \exp(\beta_M I_{L_M}).
$$
Lemma~\ref{thm:svds-to-absolute-value-of-kronecker} gives the convenient form of $|\tilde B|$:
$$
|\tilde B| = |\exp(B)| \otimes |\exp(\beta_1 I_{L_1})| \otimes \dots \otimes |\exp(\beta_M I_{L_M})|.
$$
In particular, this means:
$$
\begin{aligned}
Q_{k_1 k_2} &= |\tilde B|_{\tilde \theta(k_1, z), \tilde \theta(k_2, \vone)} \\
    &= |\exp(B)|_{k_1 k_2} \left[ \; |\exp(\beta_1 I_{L_1})| \otimes \dots \otimes |\exp(\beta_M I_{L_M})| \; \right]_{\tilde \theta(1, z), 1} \\
    &= c_z \,  |\exp(B)|_{k_1 k_2} ,
\end{aligned}
$$
where $c_z = \left[ \; |\exp(\beta_1 I_{L_1})| \otimes \dots \otimes |\exp(\beta_M I_{L_M})| \; \right]_{\tilde \theta(1, z), 1}$ is a strictly positive constant. This follows from Fact~\ref{thm:positive-absolute-value-of-one-one-plus-id}, which says that each of the $|\exp(\beta_m I_{L_m})|$ matrices have positive entries. Since $|\exp(B)| \succeq 0$ by construction, we have then that $Q \succeq 0$. Moreover, when $\exp(B)$ is full-rank, $Q \succ 0$.

Applying Fact~\ref{thm:trace-of-psd-matrix-vs-permutation}, we have that $\sigma = \id$ is a solution to our optimization problem; moreover, it is the unique solution when $\exp(B)$ is full-rank.
\end{proof}

Next, we show that the estimation error due to use of $\hat X_{\tilde B}$ in place of $X_{\tilde B}$ vanishes asymptotically. Note that relabeling permutations appear here.

\begin{lemma}
\label{thm:acsbm-permutation-estimation-error}
Assume the conditions of Theorem~\ref{thm:acsbm-permutation-consistency} hold. Let $X_{\tilde B}$ as in Proposition~\ref{thm:acsbm-representation} and $\hat X_{\tilde B}$ as in Algorithm~\ref{alg:spectral-acsbm}. For any fixed $z \in [L_1] \times \dots \times [L_M]$, let:
$$
\begin{aligned}
\hat L_z(\sigma) &= \sum_{k=1}^K \| \hat X_{\tilde B}(\sigma(k), z) - \hat X_{\tilde B}(k, \vone_M) \|_2^2 \\
L_z(\sigma) &= \sum_{k=1}^K \| X_{\tilde B}((\sigma \circ \tau_z)(k), z) - \hat X_{\tilde B}(\tau_{\vone_M}(k), \vone_M) \|_2^2 .
\end{aligned}
$$
Then for any $\sigma_1, \sigma_2 \in S_{[K]}$:
$$
\alpha_n^{-1} ( \hat L_z(\sigma_1) - \hat L_z(\sigma_2) ) = \alpha_n^{-1} ( L_z(\sigma_1) - L_z(\sigma_2) ) + o_P \left(\frac{1}{\sqrt{n \log^c n}}\right) .
$$
\end{lemma}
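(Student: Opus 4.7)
My strategy is to convert both $\hat L_z$ and $L_z$ into scalar sums of entries of the matrix absolute values $|\hat{\tilde B}|$ and $|\tilde B|$, cancel the $\sigma$-independent diagonal pieces, and close the gap using the perturbation bound from Theorem~\ref{thm:acsbm-tilde-b-consistency}. The key algebraic identity is that $\hat X_{\tilde B}\hat X_{\tilde B}^T = V|\Psi|V^T = |\hat{\tilde B}|$ and $X_{\tilde B}X_{\tilde B}^T = |\tilde B|$ (by Fact~\ref{thm:matrix-svd-to-absolute-value}), so every squared distance $\|\hat X_{\tilde B}(a, z_1) - \hat X_{\tilde B}(b, z_2)\|_2^2$ expands into two diagonal entries and $-2$ times an off-diagonal entry of $|\hat{\tilde B}|$ at indices $\tilde\theta(a, z_1)$ and $\tilde\theta(b, z_2)$, and analogously for $X_{\tilde B}$ with $|\tilde B|$.

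\textbf{Cancellation and bridge.} In the difference $\hat L_z(\sigma_1) - \hat L_z(\sigma_2)$, the diagonal contribution $\sum_k |\hat{\tilde B}|_{\tilde\theta(\sigma(k), z),\, \tilde\theta(\sigma(k), z)}$ is a rearrangement over the bijection $\sigma$ and hence does not depend on $\sigma$; the $\vone_M$ diagonal contribution obviously does not involve $\sigma$ either. The identical argument applies to $L_z(\sigma_1) - L_z(\sigma_2)$ with $\sigma \circ \tau_z$ and $\tau_{\vone_M}$ in place of $\sigma$ and the identity. Both differences therefore reduce to pure cross-term sums,
\[
\hat L_z(\sigma_1) - \hat L_z(\sigma_2) = -2 \sum_{k=1}^K \Bigl( |\hat{\tilde B}|_{\tilde\theta(\sigma_1(k), z),\, \tilde\theta(k, \vone_M)} - |\hat{\tilde B}|_{\tilde\theta(\sigma_2(k), z),\, \tilde\theta(k, \vone_M)} \Bigr),
\]
with an analogous expression for $L_z(\sigma_1) - L_z(\sigma_2)$ using entries of $|\tilde B|$ at the indices shifted by $\tau_z$ and $\tau_{\vone_M}$. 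Since $T$ is a permutation matrix and therefore orthogonal, Fact~\ref{thm:absolute-value-orthogonal-conjugation} gives $|T\tilde B T^{-1}| = T|\tilde B|T^{-1}$, which is a pure relabeling of the entries of $|\tilde B|$. Using the correspondence encoded by $T$ in the proof of Theorem~\ref{thm:acsbm-tilde-b-consistency} (the true subcommunity $\tilde\theta(k, z)$ is what the algorithm assigns to label $\tilde\theta(\tau_z(k), z)$), combined with a reindexing of the $k$-sum, the off-diagonal entries of $|\tilde B|$ appearing in $L_z(\sigma_1) - L_z(\sigma_2)$ occupy the same matrix positions in $|T\tilde B T^{-1}|$ as the off-diagonal entries of $|\hat{\tilde B}|$ appearing in $\hat L_z(\sigma_1) - \hat L_z(\sigma_2)$.

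\textbf{Perturbation bound and obstacle.} Applying Fact~\ref{thm:matrix-absolute-value-perturbation} together with Theorem~\ref{thm:acsbm-tilde-b-consistency},
\[
\bigl\| \, |\hat{\tilde B}| - |T\tilde B T^{-1}| \, \bigr\|_F \;\leq\; \sqrt{2}\, \| \hat{\tilde B} - T\tilde B T^{-1} \|_F \;=\; o_P\!\left(\frac{\alpha_n}{\sqrt{n \log^c n}}\right),
\]
so each entrywise difference between $|\hat{\tilde B}|$ and $|T\tilde B T^{-1}|$ is controlled by this Frobenius bound. Since the cross-term sums each involve only $K = O(1)$ entries, the total error is absorbed into the same rate, and dividing by $\alpha_n$ yields the stated $o_P(1/\sqrt{n \log^c n})$ bound. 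The most delicate step I expect is the index bookkeeping that establishes the correspondence in the previous paragraph: the $\hat L_z$ side uses algorithm-labeled positions $(\tilde\theta(\sigma_i(k), z), \tilde\theta(k, \vone_M))$, whereas the $L_z$ side uses shifted positions $(\tilde\theta((\sigma_i \circ \tau_z)(k), z), \tilde\theta(\tau_{\vone_M}(k), \vone_M))$, and aligning them requires carefully tracking how the permutation induced by $T$ acts on these indices via the per-covariate permutations $\tau_z, \tau_{\vone_M}$ before the Frobenius-norm control can be applied.
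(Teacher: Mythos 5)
Your proposal follows essentially the same route as the paper's proof: expand the squared norms so the diagonal ($\sigma$-independent) terms cancel, identify the cross terms with entries of $|\hat{\tilde B}|$ and $|T\tilde B T^{-1}|$ via Fact~\ref{thm:absolute-value-orthogonal-conjugation}, and control the $2K$ remaining entrywise differences with Theorem~\ref{thm:acsbm-tilde-b-consistency} combined with Fact~\ref{thm:matrix-absolute-value-perturbation}. The index bookkeeping you flag as delicate is handled exactly as you describe (the paper absorbs $\tau_z,\tau_{\vone_M}$ into the permutation matrix $T$), so the argument is correct and complete.
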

\begin{proof}
By an argument similar to the proof of Theorem~\ref{thm:acsbm-assignment-true-positions}, we observe that:
$$
\begin{aligned}
\hat L_z(\sigma) &= \hat c_z - 2\sum_{k=1}^K \langle \hat X_{\tilde B}(\sigma(k), z), \hat X_{\tilde B}(k, \vone_M) \rangle \\
L_z(\sigma) &= c_z - 2\sum_{k=1}^K \langle X_{\tilde B}((\sigma \circ \tau_z)(k), z), \hat X_{\tilde B}(\tau_{\vone_M}(k), \vone_M) \rangle \\
\end{aligned}
$$
for some constants $\hat c_z$ and $c_z$. Moreover, continuing to extend the arguments from the proof of Theorem~\ref{thm:acsbm-assignment-true-positions}, we have:
$$
\begin{aligned}
\langle \hat X_{\tilde B}(\sigma(k), z), \hat X_{\tilde B}(k, \vone_M) \rangle &= |\hat {\tilde B}|_{\tilde \theta(\sigma(k), z), \tilde \theta(k, \vone)} \\
\langle X_{\tilde B}((\sigma \circ \tau_z)(k), z), \hat X_{\tilde B}(\tau_{\vone_M}(k), \vone_M) \rangle &= |\tilde B|_{\tilde \theta((\sigma \circ \tau_z)(k), z), \tilde \theta(\tau_{\vone_M}(k), \vone)} \\
    &= (T |\tilde B| T^{-1})_{\tilde \theta(\sigma(k), z), \tilde \theta(k, \vone)} \\
    &= |T \tilde B T^{-1}|_{\tilde \theta(\sigma(k), z), \tilde \theta(k, \vone)},
\end{aligned}
$$
where $T$ is the permutation matrix from Theorem~\ref{thm:acsbm-tilde-b-consistency}. Note that the last line follows from Fact~\ref{thm:absolute-value-orthogonal-conjugation}. Therefore:
$$
\begin{aligned}
&\hat L_z(\sigma_1) - \hat L_z(\sigma_2) - ( L_z(\sigma_1) - L_z(\sigma_2)) \\
&\quad = - 2\sum_{k=1}^K |\hat {\tilde B}|_{\tilde \theta(\sigma_1(k), z), \tilde \theta(k, \vone)} + 2\sum_{k=1}^K |\hat {\tilde B}|_{\tilde \theta(\sigma_2(k), z), \tilde \theta(k, \vone)} \\
&\quad\quad + 2\sum_{k=1}^K |T \tilde B T^{-1}|_{\tilde \theta(\sigma_1(k), z), \tilde \theta(k, \vone)} - 2\sum_{k=1}^K |T \tilde B T^{-1}|_{\tilde \theta(\sigma_2(k), z), \tilde \theta(k, \vone)} \\
&\quad = 2\sum_{k=1}^K \left( |\hat {\tilde B}|_{\tilde \theta(\sigma_2(k), z), \tilde \theta(k, \vone)} - |T \tilde B T^{-1}|_{\tilde \theta(\sigma_2(k), z), \tilde \theta(k, \vone)} \right) \\
&\quad\quad - 2\sum_{k=1}^K \left( |\hat {\tilde B}|_{\tilde \theta(\sigma_1(k), z), \tilde \theta(k, \vone)} - |T \tilde B T^{-1}|_{\tilde \theta(\sigma_1(k), z), \tilde \theta(k, \vone)} \right) .
\end{aligned}
$$
Observe that the final expression consists of $2K$ terms of the form $2(|\hat {\tilde B}|_{ij} - |T \tilde B T^{-1}|_{ij})$. Combining Theorem~\ref{thm:acsbm-tilde-b-consistency} and Fact~\ref{thm:matrix-absolute-value-perturbation}, we know that:
$$
\alpha_n^{-1} \| \; |\hat {\tilde B}| - | T \tilde B T^{-1} | \; \|_F = o_P \left( \frac{1}{\sqrt{n \log^c n}} \right),
$$
from which we claim a bound on the entrywise error for any $i, j \in [K\tilde L]$:
$$
\alpha_n^{-1} ( |\hat {\tilde B}|_{ij} - | T \tilde B T^{-1}|_{ij} ) = o_P \left( \frac{1}{\sqrt{n \log^c n}} \right) .
$$
Summarizing, then, we have:
$$
\alpha_n^{-1} \left( \hat L_z(\sigma_1) - \hat L_z(\sigma_2) - ( L_z(\sigma_1) - L_z(\sigma_2)) \right) = 4K \cdot o_P \left( \frac{1}{\sqrt{n \log^c n}} \right) .
$$
Since $K$ is constant, the final result follows by simple rearrangement.
\end{proof}

For completeness, we end with a formal proof of Theorem~\ref{thm:acsbm-permutation-consistency}.

\begin{proof}[Proof of Theorem~\ref{thm:acsbm-permutation-consistency}]
Let $\hat L_z : S_{[K]} \to \R$ and $L_z : S_{[K]} \to \R$ as in the statement of Lemma~\ref{thm:acsbm-permutation-estimation-error}. We first rewrite the result of Theorem~\ref{thm:acsbm-assignment-true-positions} in a permuted order. For any fixed $z$:
$$
\begin{aligned}
&\arg \min_{\sigma \in S_{[K]}} L_z(\sigma) \\
&\quad = \arg \min_{\sigma \in S_{[K]}} \sum_{k=1}^K \| X_{\tilde B}\left( (\sigma \circ \tau_z)(k), z \right) - X_{\tilde B}\left( \tau_{\vone_M}(k), \vone_M \right) \|_2^2 \\
&\quad = \tau_{\vone_M} \circ \tau_z^{-1} .
\end{aligned}
$$
This follows from the commutativity of the sum and the fact that $S_{[K]}$ is closed under composition. In other words, we may think of the sum as going in order of $\tau_{\vone_M}(1), \dots, \tau_{\vone_M}(K)$ and minimizing over $\sigma \circ \tau_z \in S_{[K]}$ instead, if we prefer, in which case recovering the identity permutation is equivalent to recovering $\sigma \circ \tau_z = \tau_{\vone_M}$.

For each $z$, let $\sigma_z^* = \tau_{\vone_M} \circ \tau_z^{-1}$ denote the optimal permutation, and let:
$$
\begin{aligned}
a_z &= L_z(\sigma_z^*), \\
b_z &= \arg \min_{\sigma \neq \sigma_z^*} L_z(\sigma), \; \text{and} \\
\Delta_z &= b_z - a_z,
\end{aligned}
$$
so that $\Delta_z$ denotes the gap between the optimal and second-best permutation. Let $\Delta_0 = \min_z \Delta_z$. Since $X_{\tilde B}$ scales with $\sqrt{\alpha_n}$, $L_z(\cdot)$ scales with $\alpha_n$, and the quantity $\alpha_n^{-1} \Delta_0$ is constant. By assumption \assumption{2}, we may further assume $\Delta_0 > 0$.

By Lemma~\ref{thm:acsbm-permutation-estimation-error}, we have that for any permutation $\sigma \in S_{[K]}$:
$$
\alpha_n^{-1} ( \hat L_z(\sigma) - \hat L_z(\sigma_z^*) ) = \alpha_n^{-1} ( L_z(\sigma) - L_z(\sigma_z^*) ) + o_P \left(\frac{1}{\sqrt{n \log^c n}}\right) .
$$
We would like these error terms to be less than $\alpha_n^{-1} \Delta_0/2$ for all $z$. Since $\alpha_n^{-1} \Delta_0/2$ is constant, this happens with high probability for sufficiently large $n$. In this case, we have:
$$
\hat \sigma_z = \arg \min_{\sigma \in S_{[K]}} \hat L_z(\sigma) = \arg \min_{\sigma \in S_{[K]}} L_z(\sigma) = \sigma_z^* = \tau_{\vone_M} \circ \tau_z^{-1} .
$$
Consequently, for all $i \in \mathcal I_z$, since $\hat \theta_z(i) = \tau_z(\theta_i)$, we have our desired result:
$$
\hat \sigma_z( \hat \theta_z(i) ) = \tau_{\vone_M}( \tau_z^{-1}( \tau_z(\theta_i) )) = \tau_{\vone_M}(\theta_i).
$$
\end{proof}

\end{document}